\newtheorem{theorem}{Theorem}
\newtheorem{lemma}{Lemma}
\newtheorem{definition}{Definition}
\def \hPi {\widehat{\Pi}}
\def \hZ {\widehat{Z}}
\def \supp {\rm{supp}}
\def \tcL {\tilde{\cL}}
\def \tr {\textrm{tr}}
\def \cP {\mathcal{P}}
\def \cL {\mathcal{L}}
\def \cQ {\mathcal{Q}}
\def \cF {\mathcal{F}}
\def \diag {{\rm diag}}
\def \xb {\mathbf{x}}
\def \bu {\bm{u}}
\def \cU {\mathcal{U}}
\def \RR {\mathbb{R}}
\def \zero {\mathbf{0}}
\def \hSigma {\widehat{\Sigma}}
\def \rank {{\rm rank}}
\def \ub {\mathbf{u}}
\let\hat\widehat
\newcommand{\la}{\langle}
\newcommand{\ra}{\rangle}
\newcommand{\argmin}{\mathop{\mathrm{argmin}}}
\def\##1\#{\begin{align}#1\end{align}}
\def\$#1\${\begin{align*}#1\end{align*}}
\title{Sparse PCA with Oracle Property}
\author{
Quanquan Gu\\
Department of Operations Research\\ and Financial Engineering\\ Princeton University\\
 Princeton, NJ 08544, USA\\
\texttt{qgu@princeton.edu} 
\And
Zhaoran Wang\\
Department of Operations Research\\ and Financial Engineering\\ Princeton University\\
Princeton, NJ 08544, USA\\
\texttt{zhaoran@princeton.edu} 
\And
Han Liu\\
Department of Operations Research\\ and Financial Engineering\\ Princeton University\\
Princeton, NJ 08544, USA\\
\texttt{hanliu@princeton.edu} 
}
\begin{document}

\maketitle

\begin{abstract}
%In this paper, we study the estimation of the $k$-dimensional principal
%subspace of a population matrix $\Sigma$ based on a noisy input matrix $S$. We proposed two estimators based on nonconvex penalty. We show that with nonconvex penalty, we can achieve sharper rate or even oracle property. The first estimator is based on convex optimization, which is suitable for principal subspace whose projection matrix entries are of large magnitude. It is able to attain the convergence rate of standard PCA as if we know the support of the projection matrix a priori, i.e., $O(\sqrt{s/n})$ . The second estimator is based on nonconvex optimization, and we show that any local optima to the nonconvex optimization problem is a good estimator for the projection matrix.  It attains $O(s/\sqrt{n}+s\sqrt{\log p/n})$ convergence rate in the worst case. We validate the theoretical results by numerical experiments on synthetic datasets.
In this paper, we study the estimation of the $k$-dimensional sparse principal
subspace of covariance matrix $\Sigma$ in the high-dimensional setting. We aim to recover the oracle principal subspace solution, i.e., the principal subspace estimator obtained assuming the true support is known a priori. To this end, we propose a family of estimators based on the semidefinite relaxation of sparse PCA with novel regularizations. In particular, under a weak assumption on the magnitude of the population projection matrix, one estimator within this family exactly recovers the true support with high probability, has exact rank-$k$, and attains a $\sqrt{s/n}$ statistical rate of convergence with $s$ being the subspace sparsity level and $n$ the sample size. Compared to existing support recovery results for sparse PCA, our approach does not hinge on the spiked covariance model or the limited correlation condition. As a complement to the first estimator that enjoys the oracle property, we prove that, another estimator within the family achieves a sharper statistical rate of convergence than the standard semidefinite relaxation of sparse PCA, even when the previous assumption on the magnitude of the projection matrix is violated. We validate the theoretical results by numerical experiments on synthetic datasets.
\end{abstract}

%We also derive the asymptotic distribution of this estimator for statistical inference.  

\section{Introduction}

%Principal component analysis (PCA) is one of the most well-known and widely used techniques for dimensionality reduction. However, in applications where the number of variables $p$ is of the same order or even much larger than the number of observations $n$, standard PCA will produce inconsistent estimates of the principal directions of variations \cite{johnstone2009sparse}.

Principal Component Analysis (PCA) aims at recovering the top $k$ leading eigenvectors $\bu_1, \ldots, \bu_k$ of the covariance matrix $\Sigma$ from sample covariance matrix $\hSigma$. In applications where the dimension $p$ is much larger than the sample size $n$, classical PCA could be inconsistent \cite{johnstone2009sparse}. To avoid this problem, one common assumption is that the leading eigenvector $\bu_1$ of the population covariance matrix $\Sigma$ is sparse, i.e., the number of nonzero elements in $\bu_1$ is less than the sample size, $s=\supp(\bu_1)<n$. This gives rise to Sparse Principal Component Analysis (SPCA). In the past decade, significant progress has been made toward the methodological development \cite{jolliffe2003modified, d2004direct, zou2006sparse, shen2008sparse, d2008optimal, journee2010generalized, yuan2011truncated, ma2011sparse, wang2014nonconvex} as well as theoretical understanding \cite{johnstone2009sparse, paul2012augmented, amini2008high, vu2012minimax, shen2013consistency, birnbaum2013minimax, cai2013sparse, berthet2013optimal, berthet2013computational, lounici2013sparse, krauthgamer2013semidefinite} of sparse PCA.

However, all the above studies focused on estimating the leading
eigenvector $\bu_1$. When the top $k$ eigenvalues of $\Sigma$ are not distinct, there exist multiple groups of leading eigenvectors that are equivalent up to rotation. In order to address this problem,  it is reasonable
to de-emphasize eigenvectors and to instead focus on their span $\cU$, i.e., the principal subspace of
variation. \cite{vu2013fantope, vu2013minimax, lei2014sparsistency, wang2014nonconvex} proposed \emph{Subspace Sparsity}, which defines sparsity on the projection  matrix onto subspace $\cU$, i.e., $\Pi^* = UU^\top$, as the number of nonzero entries on the diagonal of $\Pi^*$, i.e., $s = |\supp(\diag(\Pi^*))|$. They proposed to estimate the principal subspace instead of principal eigenvectors of $\Sigma$, based $\ell_{1,1}$-norm regularization on a convex set called Fantope~\cite{Dattorro2011}, that provides a tight relaxation for simultaneous rank and orthogonality constraints on the positive semidefinite cone. The convergence rate of their estimator is $O(\lambda_1/(\lambda_k-\lambda_{k+1})s\sqrt{\log p/n})$, where $\lambda_k,k=1,\ldots,p$ is the $k$-th largest eigenvalue of $\Sigma$. Moreover, their support recovery relies on limited correlation condition (LCC)~\cite{lei2014sparsistency}, which is similar to irrepresentable condition in sparse linear regression. We notice that \cite{amini2008high} also analyzed the semidefinite relaxation of sparse PCA. However, they only considered rank-$1$ principal subspace and the stringent spiked covariance model, where the population covariance matrix is block diagonal.

In this paper, we aim to recover the oracle principal subspace solution, i.e., the principal subspace estimator obtained assuming the true support is known a priori. Based on recent progress made on penalized $M$-estimators with nonconvex penalty functions~\cite{loh2013regularized, wang2014}, we propose a family of estimators based on the semidefinite relaxation of sparse PCA with novel regularizations. It estimates the $k$-dimensional principal
subspace of a population matrix $\Sigma$ based on its empirical version $\hSigma$. In particular, under a weak assumption on the magnitude of the projection matrix, i.e,
 \begin{align*}
 \min_{(i,j)\in T} |\Pi^*_{ij}| \geq \nu+\frac{C\sqrt{k}\lambda_1}{\lambda_k-\lambda_{k+1}}\sqrt{\frac{s}{n}},
 \end{align*}
 where $T$ is the support of $\Pi^*$, $\nu$ is a parameter from nonconvex penalty and $C$ is an universal constant, one estimator within this family exactly recovers the oracle solution with high probability, and is exactly of rank $k$. It is worth noting that unlike the linear regression setting, where the estimators that can recover the oracle solution often have nonconvex formulations, our estimator here is obtained from a convex optimization\footnote{Even though we use nonconvex penalty, the resulting problem as a whole is still a convex optimization problem, because we add another strongly convex term in the regularization part, i.e., $\tau/2\|\Pi\|_F$.}, and has unique global solution. Compared to existing support recovery results for sparse PCA, our approach does not hinge on the spiked covariance model~\cite{amini2008high} or the limited correlation condition~\cite{lei2014sparsistency}. Moreover, it attains the same convergence rate as standard PCA as if the support of the true projection matrix is provided a priori. More specifically, the Frobenius norm
error of the estimator $\hPi$ is bounded with high probability as follows
\begin{align*}
\|\hPi-\Pi^*\|_F \leq \frac{C\lambda_1}{\lambda_k-\lambda_{k+1}}\sqrt{\frac{ks}{n}},
\end{align*}
where $k$ is the dimension of the subspace. %Since the estimator attains the oracle estimator restricted on the true support of the projection matrix, we can use the limiting distribution of the principal subspace estimator of standard PCA to calculate the limiting distribution of the principal subspace estimator of SPCA. Based on the limiting distribution, we can calculate the confidence region for the principal subspace of sparse PCA.

As a complement to the first estimator that enjoys the oracle property, we prove that, another estimator within the family achieves a sharper statistical rate of convergence than the standard semidefinite relaxation of sparse PCA, even when the previous assumption on the magnitude of the projection matrix is violated. This estimator is based on nonconvex optimizaiton. With a suitable choice of the regularization parameter, we show that any local optima to the optimization problem is a good estimator for the projection matrix of the true principal subspace. In particular, we show that
the Frobenius norm
error of the estimator $\hPi$ is bounded with high probability as
\begin{align*}
\|\hPi - \Pi^*\|_F \leq \frac{C\lambda_1}{\lambda_k-\lambda_{k+1}}\left(\sqrt{\frac{s_1s}{ n}}+\sqrt{m_1m_2 \frac{\log p}{n}}\right),
\end{align*}
where $s_1,m_1,m_2$ are all no larger than $s$. Evidently, it is sharper than the convergence rate proved in~\cite{vu2013fantope}. Note that the above rate consists of two terms, the $O(\sqrt{s_1s/ n})$ term corresponds to the entries of projection matrix satisfying the previous assumption (i.e., with large magnitude), while $O(\sqrt{m_1m_2 \log p/n})$ corresponds to the entries of projection matrix violating the previous assumption (i.e., with small magnitude). %We will discuss it in more details later.%While the signal strength is weak, we have
%which is in the same order of~\cite{DBLP:conf/nips/VuCLR13}.

%Sparsistency is the ability of an estimator to accurately select the correct subsets of variables from a model where only subset of variables is assumed to be relevant. We

Finally, we demonstrate the numerical experiments on synthetic datasets, which support our theoretical analysis.

%In the context of least squares regression with nonconvex penalties, several

 %\cite{DBLP:conf/nips/VuCLR13} generalized DSPCA.

%In this paper, we propose a novel convex optimization problem

The rest of this paper is arranged as follows. Section~\ref{sec:estimators} introduces two estimators for the principal subspace of a covariance matrix. Section~\ref{sec:theory} analyzes the statistical properties of the two estimators. We present an algorithm for solving the estimators in Section~\ref{sec:algorithm}. Section~\ref{sec:experiments} shows the experiments on synthetic datasets. Section~\ref{sec:conclusions} concludes this work with remarks.

%\subsection{Notation}
{\bf Notation.}
Let $[p]$ be the shorthand for $\{1,\ldots,p\}$. For matrices $A$, $B$ of compatible dimension, $\langle A,B\rangle := \tr(A^\top B)$ is the Frobenius inner
product, and $\|A\|_F = \sqrt{\langle A,A\rangle}$ is the squared Frobenius norm. $\|x\|_q$ is the usual $\ell_q$ norm with $\|x\|_0$
defined as the number of nonzero entries of $x$. $\|A\|_{a,b}$ is the $(a, b)$-norm defined to be the $\ell_b$ norm
of the vector of rowwise $\ell_a$ norms of $A$, e.g. $\|A\|_{1,\infty}$
is the maximum absolute row sum. $\|A\|_2$ is the spectral norm of $A$, and $\|A\|_*$ is the trace norm (nuclear norm) of $A$. For a
symmetric matrix $A$, we define $\lambda_1(A) \geq \lambda_2(A) \geq \ldots \geq \lambda_p(A)$ to be the eigenvalues of $A$ with multiplicity.
When the context is obvious we write $\lambda_j = \lambda_j(A)$ as shorthand. %For two subspaces $\cM_1$ and $\cM_2$,
%$\sin\Theta(\cM_1,\cM_2)$ is defined to be the matrix whose diagonals are the sines of the canonical angles
%between the two subspaces

\section{The Proposed Estimators}
\label{sec:estimators}

In this section, we present a family of estimators based on the semidefinite relaxation of sparse PCA with novel regularizations, for the principal subspace of the population covariance matrix. Before going into the details of the proposed estimators, we first present the formal definition of principal subspace estimation.

\subsection{Problem Definition}

Let $\Sigma \in \RR^{p\times p}$ be an unknown covariance matrix, with eigen-decomposition as follows
\begin{align*}
\Sigma = \sum_{i=1}^p \lambda_i \ub_i\ub_i^\top,
\end{align*}
where $\lambda_1\geq \ldots \geq \lambda_p$ are eigenvalues (with multiplicity) and $\ub_1,\ldots,\ub_p \in \RR^p$ are the associated eigenvectors. The $k$-dimensional principal subspace of $\Sigma$ is the subspace spanned by $\ub_1,\ldots,\ub_k$. The projection matrix to the $k$-dimensional principal subspace is
\begin{align*}
\Pi^* = \sum_{i=1}^k \ub_i\ub_i^\top = UU^\top,
\end{align*}
where $U = [\ub_1,\ldots,\ub_k]$ is an orthonormal matrix. The reason why principal subspace is more appealing is that it avoids  the problem of un-identifiability of eigenvectors when the eigenvalues are not distinct. In fact, we only need to assume $\lambda_k - \lambda_{k+1} > 0$ instead of $\lambda_1>\ldots>\lambda_k>\lambda_{k+1}$. Then the principal subspace $\Pi^*$ is unique and identifiable. We also assume that $k$ is fixed.

Next, we introduce the definition of \textit{Subspace Sparsity}~\cite{vu2013minimax}, which can be seen as the extension of conventional \textit{Eigenvector Sparsity} used in sparse PCA.

\begin{definition}~\cite{vu2013minimax} (Subspace Sparsity) The projection $\Pi^*$ onto the subspace spanned by the eigenvectors of $\Sigma$ corresponding to its $k$ largest
eigenvalues satisfies $\|U\|_{2,0} \leq s$, or equivalently $\|\diag(\Pi)\|_0 \leq s$.
\end{definition}
In the extreme case that $k=1$, the support of the projection matrix onto the rank-$1$ principal subspace is the same as the support of the sparse leading eigenvector. %In general, the support of the projection matrix onto the rank-$k$ principal subspace is the union of the support of the top-$k$ eigenvectors. %the definition of subspace sparsity is more advantageous than eigenvector sparsity. The reason is, when the top $k$ eigenvalues of $\Sigma$ are not distinct, there exist multiple groups of leading eigenvectors that are equivalent up to rotation.

The problem definition of principal subspace estimation is: given an i.i.d. sample $\{\xb_1,\xb_2,\ldots,\xb_n\}\subset \RR^p$ which are drawn from an unknown distribution of zero mean and covariance matrix $\Sigma$, we aim to estimate $\Pi^*$ based on the empirical covariance matrix $S \in \mathbb{R}^{p\times p}$, that is given by $\hSigma = 1/n\sum_{i=1}^n\xb_i\xb_i^\top$. We are particularly interested in the high dimensional setting, where $p\rightarrow \infty$ as $n \rightarrow \infty$, in sharp contrast to conventional setting where $p$ is fixed and $n\rightarrow \infty$.

%We now define the signal strength of the projection matrix $\Pi^*$ as the magnitude of its entries, i.e., $|\Pi_{ij}^*|, i,j=1,\ldots,p$

Now we are ready to design a family of estimators for $\Pi^*$.  %In the following, we will introduce these two estimators respectively. We will analyze both of them theoretically in the next section.

\subsection{A Family of Sparse PCA Estimators}

%The first estimator is based on a convex optimization. Since it attains the same solution as the oracle estimator which estimates the projection matrix restricted on the support set of the true projection matrix, provided that the signal is sufficiently strong, we call it \textit{Oracle Sparse PCA Estimator}.

Given a sample covariance matrix $\hSigma \in \mathbb{R}^{p\times p}$, we propose a family of sparse principal subspace estimator $\hPi$ that is
defined to be a solution of the semidefinite relaxation of sparse PCA
\begin{align}\label{eq:SPCA nonconvex penalty}
\hPi_{\tau} = \argmin_{\Pi} & ~ - \langle \hSigma,\Pi \rangle + \frac{\tau}{2}\|\Pi\|_F^2+\cP_{\lambda}(\Pi),\quad\text{subject to} ~\Pi \in \cF^k,
\end{align}
where $\tau>0, \lambda>0$ is a regularization parameter, $\cF^k$ is a convex body called the Fantope~\cite{Dattorro2011, vu2013fantope}, that is defined as follows
\begin{align*}
\cF^k = \{\textrm{$X: 0 \prec X \prec I$ and $\tr(X) = k$ }\},
\end{align*}
and $\cP_\lambda(\Pi)$ is a decomposable nonconvex penalty, i.e.,
$\cP_{\lambda}(\Pi) = \sum_{i,j=1}^p p_{\lambda}(\Pi_{ij})$. Typical nonconvex penalties include the smoothly clipped absolute deviation (SCAD) penalty~\cite{Fan01} and minimax concave penalty MCP~\cite{ZnangCH10}, which can eliminate the estimation bias and attain more refined statistical rates of convergence~\cite{loh2013regularized, wang2014}. %Typical nonconvex penalty includes MCP~\cite{ZnangCH10} and SCAD~\cite{Fan01}.
For example, MCP penalty is defined as
\begin{align}\label{eq:MCP}
p_{\lambda}(t) =\lambda \int_0^{|t|} \left(1-\frac{z}{\lambda b}\right)_+dz=\left(\lambda |t| - \frac{t^2}{2b}\right)\mathbf{1}(|t|\leq b\lambda) + \frac{b\lambda^2}{2}\mathbf{1}(|t|>b\lambda),
\end{align}
where $b>0$ is a fixed parameter.

An important property of the nonconvex penalties $p_{\lambda}(t)$ is that they can be formulated as the sum of the $\ell_1$ penalty and a concave part $q_{\lambda}(t)$: $p_{\lambda}(t) = \lambda |t| + q_{\lambda}(t)$. For example, if $p_\lambda(t)$ is chosen to be the MCP penalty, then the corresponding $q_\lambda(t)$ is:
\begin{align*}
q_{\lambda}(t) =  - \frac{t^2}{2b}\mathbf{1}(|t|\leq b\lambda) + \biggl(\frac{b\lambda^2}{2}-\lambda |t|\biggr)\mathbf{1}(|t|>b\lambda),
\end{align*}

We rely on the following regularity conditions on $p_{\lambda}(t)$ and its concave component $q_{\lambda}(t)$:
\begin{enumerate}%
\item [(a)] $p_{\lambda}(t)$ satisfies $p'_{\lambda}(t) = 0, \quad \textrm{for} \quad |t|  \geq \nu >0$.
%\begin{align}\label{eq:signal strength}
%\end{align}
  \item [(b)] $q'_{\lambda}(t)$ is monotone and Lipschitz continuous, i.e., for $t' \geq t$, there exists a constant $\zeta_- \geq 0$ such that %$-\zeta_- \leq (q'_{\lambda}(t') - q'_{\lambda}(t))/(t'-t)$.
      \begin{align*}
      -\zeta_- \leq \frac{q'_{\lambda}(t') - q'_{\lambda}(t)}{t'-t}.
      \end{align*}
  %\item [(b)] $q_{\lambda}(\beta_j)$ is symmetric, i.e., $q_{\lambda}(-\beta_j) = q_{\lambda}(\beta_j)$ for any $\beta_j$
  \item [(c)] $q_{\lambda}(t)$ and $q'_{\lambda}(t)$ pass through the origin, i.e., $q_{\lambda}(0) = q'_{\lambda}(0) = 0$.
  \item [(d)] $q'_{\lambda}(t)$ is bounded, i.e., $|q'_{\lambda}(t)| \leq \lambda$ for any $t$.
  %\item [(e)] $q'_{\lambda}(\beta_j)$ has bounded difference with respect to $\lambda$: $|q'_{\lambda_1}(\beta_j)-q'_{\lambda_2}(\beta_j)|\leq |\lambda_1 - \lambda_2|$ for any $\beta_j$
\end{enumerate}

The above conditions apply to a variety of nonconvex penalty functions. For example, for MCP in \eqref{eq:MCP}, we have $\nu = b\lambda$ and $\zeta_- = 1/b$.

It is easy to show that when $\tau>\zeta_-$, the problem in~\eqref{eq:SPCA nonconvex penalty} is strongly convex, and therefore its solution is unique. We notice that \cite{lei2014sparsistency} also introduced the same regularization term $\tau/2\|\Pi\|_F^2$ in their estimator. However, our motivation is quite different from theirs. We introduce this term because it is essential for the estimator in~\eqref{eq:SPCA nonconvex penalty} to achieve the oracle property provided that the magnitude of all the entries in the population projection matrix is sufficiently large. We call~\eqref{eq:SPCA nonconvex penalty} \textit{Convex Sparse PCA Estimator}.

Note that constraint $\Pi \in \cF^k$ only guarantees that the rank of $\hPi$ is $\geq k$. However, we can prove that our estimator is of rank $k$ exactly. This is in contrast to~\cite{vu2013fantope}, where some post projection is needed, to make sure their estimator is of rank $k$.

%for SCAD, we can take $\zeta_- = \frac{1}{a-1}$, while

%which we will show in Section~\ref{sec:theory}. In the next section, we will show that when $\tau>\zeta_-$, provided that the magnitude of entries in the population projection matrix $\Pi^*$ is sufficiently large, i.e., $\min_{1\leq i,j\leq p}|\Pi_{ij}^*|>\nu$ i.e., it attains the same solution as the oracle estimator which assumes the support set of the true projection matrix is known a priori

\subsection{Nonconvex Sparse PCA Estimator}

%Based on property $3$ in Lemma~\ref{lemma:property of Fantope},
%As we will shown in Section~\ref{sec:theory}, the estimator in~\eqref{eq:SPCA nonconvex penalty} is suitable for projection matrix whose every nonzero entries are sufficiently large. i.e., $\min_{1\leq i,j\leq p}|\Pi_{ij}^*|>\nu$ does not hold

In the case that the magnitude of entries in the population projection matrix $\Pi^*$ violates the previous assumption, \eqref{eq:SPCA nonconvex penalty} with $\tau>\zeta_-$ no longer enjoys the desired oracle property. To this end, we consider another estimator from the family of estimators in~ \eqref{eq:SPCA nonconvex penalty} with $\tau=0$,
\begin{align}\label{eq:SPCA nonconvex penalty2}
\hPi_{\tau=0} = \argmin_{\Pi} & ~ -\langle \hSigma,\Pi \rangle  +\cP_{\lambda}(\Pi),\quad\textrm{subject to} ~\Pi \in \cF^k.
\end{align}
Since $- \langle \hSigma,\Pi \rangle$ is an affine function, and $\cP_{\lambda}(\Pi)$ is nonconvex, the estimator in \eqref{eq:SPCA nonconvex penalty2} is nonconvex. We simply refer to it as \textit{Nonconvex Sparse PCA Estimator}. We will prove that it achieves a sharper statistical rate of convergence than the standard semidefinite relaxation of sparse PCA~\cite{vu2013fantope}, even when the previous assumption on the magnitude of the projection matrix is violated. %It is suitable for general signal strength in the projection matrix. The key difference between \eqref{eq:SPCA nonconvex penalty2} and \eqref{eq:SPCA nonconvex penalty} is that the strongly convex term $\frac{\tau}{2}\|\Pi\|_F^2$ is removed.

It is worth noting that although our estimators in~\eqref{eq:SPCA nonconvex penalty} and~\eqref{eq:SPCA nonconvex penalty2} are  for the projection matrix $\Pi$ of the principal subspace, we can also provide an estimator of $U$. By definition, the true subspace satisfies $\Pi^* = UU^\top$. Thus, the estimator $\hat{U}$ can be computed from $\hat{\Pi}$ using eigenvalue decomposition. In detail, we can set the columns of $\hat{U}$ to be the top k leading eigenvectors of $\hat{\Pi}$. In case that the top k eigenvalues of $\hat{\Pi}$ are the same, we can follow the standard PCA convention by rotating the eigenvectors with a rotation matrix $R$, such that $(\hat{U} R)^T \hat{\Sigma} (\hat{U} R)$ is diagonal. Then $\hat{U} R$ is the orthonormal basis for the estimated principal subspace, and can be used for visualization and dimension reduction.

 %In order to obtain a proper $k$-dimensional subspace estimator, following~\cite{DBLP:conf/nips/VuCLR13},  we need to extract the $k$ leading eigenvectors of $\hPi$, i.e., $\hU_k$, and obtain the projection matrix $\hPi = \hU_k\hU_k^\top$, where we overload $\hPi$ for simplicity.

%Both of the estimators can be solved by ADMM, which will be introduced in Section~\ref{sec:algorithm}.
\section{Statistical Properties of the Proposed Estimators}
\label{sec:theory}
%in this section
In this section, we present the statistical properties of the two estimators in the family~\eqref{eq:SPCA nonconvex penalty}. One is with $\tau > \zeta_-$, the other is with $\tau=0$. The proofs are all included in the longer version of this paper.

To evaluate the statistical performance of the principal subspace estimators, we need to define the estimator error between the estimated projection matrix and the true projection matrix. In our study, we use the Frobenius norm error $\|\hPi - \Pi^*\|_F$. %Note that for the estimated principal subspace $\hat{\cM}$ spanned by $\hat{U}=\{\hat{\bu}_1,\ldots,\hat{\bu}_k\}$ with $\hPi = \hat{U}\hat{U}^\top$, and the true principle subspace $\cM^*$ spanned by $U=\{\bu_1,\ldots,\bu_k\}$ with $\Pi^* = UU^\top$, we have $\|\sin\Theta(\hat{\cM},\cM^*)\|_F = 1/\sqrt{2}\|\hPi-\Pi^*\|_F$. Recall that $\sin\Theta(\hat{\cM},\cM^*)$ is defined to be the matrix whose diagonals are the sines of the canonical angles between the two subspaces. So our error bound can be compared with the bounds in~\cite{vu2013minimax} directly.

\subsection{Oracle Property and Convergence Rate of Convex Sparse PCA}

%Besides the rate of convergence for parameter estimation in Theorem~\ref{thm:estimation error},

We first analyze the estimator in~\eqref{eq:SPCA nonconvex penalty} when $\tau > \zeta_-$. We prove that, the estimator $\hPi$ in \eqref{eq:SPCA nonconvex penalty} recovers the support of $\Pi^*$ under suitable conditions on its magnitude. Before we present this theorem, we introduce the definition of an oracle estimator, denoted by $\hPi_O$. Recall that $S = \supp(\diag(\Pi^*))$. The oracle estimator $\hPi_O$ is defined as
\begin{align}\label{eq:oracle estimator}
\hPi_O = \argmin_{\supp(\diag(\Pi))\subseteq S, \Pi \in \cF^k} \cL(\Pi).
\end{align}
where $\cL(\Pi) = -\la \hSigma,\Pi\ra + \frac{\tau}{2}\|\Pi\|_F^2$.
Note that the above oracle estimator is not a practical estimator, because we do not know the true support $S$ in practice.

The following theorem shows that,  under suitable conditions,  $\hPi$ in~\eqref{eq:SPCA nonconvex penalty} is the same as the oracle estimator $\hPi_O$ with high probability, and therefore exactly recovers the support of $\Pi^*$.

\begin{theorem}(Support Recovery)\label{thm:oracle property}
Suppose the nonconvex penalty $\cP_{\lambda}(\Pi) = \sum_{i,j=1}^p p_{\lambda}(\Pi)$ satisfies conditions (a) and (b). If $\Pi^*$ satisfies $\min_{(i,j)\in T} |\Pi^*_{ij}| \geq \nu+C\sqrt{k}\lambda_1/(\lambda_k-\lambda_{k+1})\sqrt{s/n}$. For the estimator in~\eqref{eq:SPCA nonconvex penalty} with the regularization parameter $\lambda = C\lambda_1\sqrt{\log p/n}$ and $\tau> \zeta_-$, we have with probability at least $1-1/n^2$ that $\hPi = \hPi_O$, which further implies $\supp(\diag(\hPi)) = \supp(\diag(\hPi_O)) = \supp(\diag(\Pi^*))$ and $\rank(\hPi) = \rank(\hPi_O)= k$.
\end{theorem}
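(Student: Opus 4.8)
The proof is a primal--dual witness argument. Since $p_\lambda(t)=\lambda|t|+q_\lambda(t)$ and, by condition (b), $q_\lambda$ has second derivative bounded below by $-\zeta_-$, for $\tau>\zeta_-$ the map $\Pi\mapsto\frac{\tau}{2}\|\Pi\|_F^2+\cP_\lambda(\Pi)$ is strongly convex, so \eqref{eq:SPCA nonconvex penalty} has a unique global minimizer $\hPi$. It therefore suffices to produce a subgradient certificate showing that $\hPi_O$ is a minimizer of \eqref{eq:SPCA nonconvex penalty}; then $\hPi=\hPi_O$, and the support and rank statements follow once the structure of $\hPi_O$ is understood.

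\emph{Step 1 (structure and rate of the oracle estimator).} The constraint $\supp(\diag(\Pi))\subseteq S$ together with $\Pi\succeq 0$ forces $\Pi$ to be supported on $S\times S$, so $\hPi_O$ is the solution of the $s$-dimensional subproblem $\min\,-\langle\hSigma_{SS},X\rangle+\frac{\tau}{2}\|X\|_F^2$ over the $s$-dimensional Fantope $\cF^k_{(s)}$. A unitary/water-filling argument shows this solution commutes with $\hSigma_{SS}$ and, whenever $\tau<\lambda_k(\hSigma_{SS})-\lambda_{k+1}(\hSigma_{SS})$, equals the projection onto the top-$k$ eigenspace of $\hSigma_{SS}$; in particular $\rank(\hPi_O)=k$. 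This gap regime is nonvacuous: one checks that $\mathrm{range}(U)$ is invariant under $\Sigma_{SS}$ with eigenvalues $\lambda_1,\dots,\lambda_k$ (because $\langle\ub_i|_S,\ub_j\rangle=\langle\ub_i,\ub_j\rangle=0$ for $i>k\ge j$) while its orthogonal complement carries eigenvalues $\le\lambda_{k+1}$, so $\Sigma_{SS}$ has eigengap $\ge\lambda_k-\lambda_{k+1}$ and $\Pi^*_{SS}$ is its top-$k$ eigenprojection; Weyl's inequality plus the restricted concentration bound $\|(\hSigma-\Sigma)_{SS}\|_2\le C\lambda_1\sqrt{s/n}$ (standard for sub-Gaussian covariance estimation in $s$ coordinates, with probability $\ge 1-\tfrac12 n^{-2}$) keeps the empirical gap positive for large $n$. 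Then Davis--Kahan applied to $\hSigma_{SS}$ versus $\Sigma_{SS}$ gives $\|\hPi_O-\Pi^*\|_F\le C\sqrt{k}\lambda_1/(\lambda_k-\lambda_{k+1})\cdot\sqrt{s/n}$, and a fortiori the same bound on $\|\hPi_O-\Pi^*\|_\infty$. Combined with the magnitude assumption (constants chosen compatibly) this yields $|\hPi_{O,ij}|\ge\nu$ for every $(i,j)\in T$, in particular $\hPi_{O,ii}\ge\nu>0$ for $i\in S$, so $\supp(\diag(\hPi_O))=S=\supp(\diag(\Pi^*))$.

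\emph{Step 2 (the witness).} Writing $\cP_\lambda=\lambda\|\cdot\|_{1,1}+\cQ_\lambda$ with $\cQ_\lambda$ smooth and concave, the objective of \eqref{eq:SPCA nonconvex penalty} is convex for $\tau>\zeta_-$, and $\hPi_O$ minimizes it over $\cF^k$ iff there is $Z\in\partial\|\hPi_O\|_{1,1}$ with $W:=\hSigma-\tau\hPi_O-\nabla\cQ_\lambda(\hPi_O)-\lambda Z$ in the normal cone of $\cF^k$ at $\hPi_O$, i.e.\ (for $W$ with a spectral gap, as will hold) $\mathrm{range}(\hPi_O)$ is the top-$k$ eigenspace of $W$. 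On entries in $T$ we have $|\hPi_{O,ij}|\ge\nu$, so condition (a) forces $q'_\lambda(\hPi_{O,ij})+\lambda Z_{ij}=p'_\lambda(\hPi_{O,ij})=0$ there; off $S\times S$ the entries of $Z$ are free in $[-1,1]$, and a convenient choice is $\lambda Z=\hSigma-\Sigma$ on those entries, which is admissible since $\|\hSigma-\Sigma\|_\infty\le\lambda$ with probability $\ge 1-\tfrac12 n^{-2}$ for $\lambda=C\lambda_1\sqrt{\log p/n}$. With this choice $W$ agrees with $\hSigma_{SS}-\tau\hPi_{O,SS}$ on $S\times S$ up to a matrix supported on $S\times S\setminus T$ with entries of size $\le\lambda$, and equals $\Sigma$ on the $S^c$ rows and columns, where $\Sigma$ is benign ($\Sigma_{S^cS^c}\preceq\lambda_{k+1}I$, and $\Sigma_{S^c,S}$ annihilates $\mathrm{range}(U)$). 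Since $\hSigma_{SS}-\tau\hPi_{O,SS}$ has top-$k$ eigenspace $\mathrm{range}(\hPi_O)$ with gap $\ge(\lambda_k-\lambda_{k+1})-\tau-2\|(\hSigma-\Sigma)_{SS}\|_2>0$, and all remaining pieces of $W$ are either $\le\lambda$ entrywise on $O(s)$ coordinates or bounded in operator norm by $\lambda_{k+1}+C\lambda_1\sqrt{s/n}$ (with a small further adjustment of $Z$ to zero the residual cross-block coming from the empirical-versus-population eigenspace discrepancy), a Weyl/Davis--Kahan perturbation argument shows the top-$k$ eigenspace of $W$ is still exactly $\mathrm{range}(\hPi_O)$. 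Both roles of $\tau$ are essential: $\tau>\zeta_-$ for uniqueness, and $\tau<\lambda_k-\lambda_{k+1}$ (which, with $\zeta_-<\lambda_k-\lambda_{k+1}$, is arranged by taking the nonconvex-penalty parameter such as $b$ for MCP large) so the range-block eigenvalues of $W$ dominate those of the $S^c$-block. Hence $W\in\mathcal N_{\cF^k}(\hPi_O)$, so $\hPi=\hPi_O$, and the support and rank claims follow from Step 1.

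The step I expect to be the real obstacle is the dual-certificate construction in Step 2 --- pinning the top-$k$ eigenspace of $W$ to $\mathrm{range}(\hPi_O)$, in particular ruling out that the unpenalized sample covariance drags the leading eigenspace onto spurious directions in $S^c$, whose empirical eigenvalues can be of order $\lambda_{k+1}\,p/n\gg\lambda_1$. This is exactly where an $\ell_1$-penalized analysis would invoke a limited-correlation/irrepresentable condition; here it is replaced by (i) the flat tail of the nonconvex penalty, which via condition (a) removes the bias on $T$, (ii) the choice $\lambda\asymp\|\hSigma-\Sigma\|_\infty$, which lets the subgradient absorb the noisy off-support blocks of $\hSigma$ and leaves only the well-conditioned population block $\Sigma_{S^cS^c}\preceq\lambda_{k+1}I$, and (iii) the strongly convex term $\tfrac{\tau}{2}\|\Pi\|_F^2$ with $\tau$ squeezed between $\zeta_-$ and the eigengap. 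All randomness enters through the two concentration events above, whose failure probabilities sum to at most $n^{-2}$.
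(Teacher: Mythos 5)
Your overall strategy --- show that the oracle estimator $\hPi_O$ itself satisfies the first-order optimality condition of \eqref{eq:SPCA nonconvex penalty} over the full Fantope, then invoke strong convexity ($\tau>\zeta_-$) for uniqueness --- is the same as the paper's, and your Step 1 matches the paper's: restricted spectral concentration $\|(\hSigma-\Sigma)_{SS}\|_2\lesssim\lambda_1\sqrt{s/n}$ plus the curvature lemma gives $\|\hPi_O-\Pi^*\|_F\leq C\sqrt{k}\lambda_1/(\lambda_k-\lambda_{k+1})\sqrt{s/n}$, and the magnitude condition then forces $|\hPi_{O,ij}|\geq\nu$ on $T$ so that condition (a) kills the penalty gradient there. (You are in fact more careful than the paper about why $\hPi_O$ has rank exactly $k$, correctly observing that this needs $\tau$ below the empirical eigengap of $\hSigma_{SS}$ --- a restriction the theorem statement omits.)

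The gap is in Step 2, and it is the step you yourself flag as the obstacle. You encode optimality of $\hPi_O$ as a spectral condition: the dual matrix $W$ must have $\mathrm{range}(\hPi_O)$ as its \emph{exact} top-$k$ eigenspace. With your choice $\lambda Z=\hSigma-\Sigma$ off $S\times S$, the cross block of $W$ is $\Sigma_{S^cS}$, which annihilates $\mathrm{range}(U)$ but \emph{not} $\mathrm{range}(\hPi_O)$ (the top-$k$ eigenspace of $\hSigma_{SS}$, which is only $O(\lambda_1\sqrt{s/n}/(\lambda_k-\lambda_{k+1}))$-close to $\mathrm{range}(U)$). So $\mathrm{range}(\hPi_O)$ is not an invariant subspace of $W$, and Weyl/Davis--Kahan can only conclude that the top-$k$ eigenspace of $W$ is \emph{close} to $\mathrm{range}(\hPi_O)$ --- which certifies nothing, since the normal-cone condition demands exact coincidence. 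The ``small further adjustment of $Z$'' that is supposed to zero this residual must fit entrywise into the slack $\lambda-|\lambda Z_{ij}|$ on the off-support entries, and you give no argument that it does; this is precisely where an irrepresentable/LCC-type condition would normally be invoked, i.e.\ the very thing the theorem claims to avoid. The paper's proof is structured to sidestep this entirely: it never asks $\mathrm{range}(\hPi_O)$ to be invariant under any $p\times p$ dual matrix. Instead it verifies the variational inequality $\langle\hPi_O-\Pi',\nabla\tcL_\lambda(\hPi_O)+\lambda\hZ_O\rangle\leq 0$ by splitting the inner product entrywise over $T$ and $T^c$: on $T$ the penalty gradient vanishes and the restricted optimality of $\hPi_O$ supplies the sign; on $T^c$ one sets $(\hZ_O)_{ij}=-(\nabla\cL(\hPi_O))_{ij}/\lambda$ (admissible since $|(\nabla\cL(\hPi_O))_{ij}|\leq\lambda$ there), making that part of the sum identically zero. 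Adding the two strong-convexity inequalities for $\tcL_\lambda$ to the two $\ell_1$-convexity inequalities and using both variational inequalities then yields $(\tau-\zeta_-)\|\hPi-\hPi_O\|_F^2\leq 0$, hence exact equality. To repair your proof you should replace the spectral certificate of Step 2 by this entrywise variational-inequality argument; as written, your route establishes at best that $\hPi$ is close to $\hPi_O$, not that $\hPi=\hPi_O$.
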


For example, if we use MCP penalty, the magnitude assumption turns out to be $\min_{(i,j)\in T} |\Pi^*_{ij}| \geq Cb\lambda_1\sqrt{\log p/n}+C\sqrt{k}\lambda_1/(\lambda_k-\lambda_{k+1})\sqrt{s/n}$.

%Our theory in Theorem~\ref{thm:oracle property} applies to any nonconvex penalties which satisfy the conditions stated in (a) and (b).
%because it does not hinge on the spiked covariance model or the limited correlation condition on the covariance matrix

Note that in our proposed estimator in~\eqref{eq:SPCA nonconvex penalty}, we do not rely on any oracle knowledge on the true support. Our theory in Theorem~\ref{thm:oracle property} shows that, with high probability, the estimator is identical to the oracle estimator, and thus exactly recovers the true support. 

Compared to existing support recovery results for sparse PCA~\cite{amini2008high, lei2014sparsistency}, our condition on the magnitude is weaker. Note that the limited correlation condition \cite{lei2014sparsistency} and the even stronger spiked covariance condition \cite{amini2008high} impose constraints not only on the principal subspace corresponding to $\lambda_1, \ldots, \lambda_k$, but also on the ``non-signal" part, i.e., the subspace corresponding to $\lambda_{k+1}, \ldots, \lambda_p$. Unlike these conditions, we only impose conditions on the ``signal" part, i.e., the magnitude of the projection matrix $\Pi^*$ corresponding to $\lambda_1,\ldots,\lambda_k$. We attribute the oracle property of our estimator to novel regularizations ($\tau/2\|\Pi\|_F^2$ plus nonconvex penalty).

%We show that, provided that the signal $\Pi_{ij}^*,1\leq i,j\leq p$ are sufficiently strong, the unique enjoys the oracle property.

The oracle property immediately implies that under the above conditions on the magnitude, the estimator in~\eqref{eq:SPCA nonconvex penalty} achieves the convergence rate of standard PCA as if we know the true support $S$ a priori. This is summarized in the following  theorem.

\begin{theorem}\label{thm:estimator error strong signal}
%Suppose the nonconvex penalty $\cP_{\lambda}(\Pi) = \sum_{i,j=1}^p p_{\lambda}(\Pi)$ satisfies conditions (a) and (b). If $\Pi^*$ satisfies $\min_{(i,j)\in T} |(\Pi^*)_{ij}| \geq \nu+\frac{C\sqrt{k}\lambda_1}{\lambda_k-\lambda_{k+1}}\sqrt{\frac{s}{n}}$.  For the estimator in~\eqref{eq:SPCA nonconvex penalty} with the regularization parameter $\lambda = C\lambda_1\sqrt{\log p/n}$ and $\tau> \zeta_-$,
Under the same conditions of Theorem~\ref{thm:oracle property}, we have with probability at least $1-1/n^2$ that
\begin{align*}
\|\hPi - \Pi^*\|_F \leq \frac{C\sqrt{k}\lambda_1}{\lambda_k-\lambda_{k+1}}\sqrt{\frac{s}{n}},
\end{align*}
for some universal constant $C$.
\end{theorem}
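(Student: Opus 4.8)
The plan is to derive Theorem~\ref{thm:estimator error strong signal} as a short consequence of Theorem~\ref{thm:oracle property}, a curvature property of the linear functional $\la\Sigma,\cdot\ra$ on the Fantope $\cF^k$, and an operator-norm deviation bound for the sample covariance restricted to the true support $S=\supp(\diag(\Pi^*))$ (with $|S|\le s$). Write $\delta=\lambda_k-\lambda_{k+1}>0$, and let $\cE$ be the event on which the conclusions of Theorem~\ref{thm:oracle property} hold, i.e.\ $\hPi=\hPi_O$, $\rank(\hPi)=k$ and $\supp(\diag(\hPi))=S$; then $\mathbb{P}(\cE)\ge 1-1/n^2$ and it suffices to bound $\|\hPi_O-\Pi^*\|_F$ on $\cE$. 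First, since $\Pi^*$ is a rank-$k$ orthogonal projection with $\supp(\diag(\Pi^*))=S$, it is feasible for the oracle program~\eqref{eq:oracle estimator}, so $\cL(\hPi_O)\le\cL(\Pi^*)$, that is, $-\la\hSigma,\hPi_O\ra+\tfrac\tau2\|\hPi_O\|_F^2\le-\la\hSigma,\Pi^*\ra+\tfrac\tau2\|\Pi^*\|_F^2$. On $\cE$, $\hPi_O$ has rank exactly $k$ and lies in $\cF^k$, hence it is an orthogonal projection with $\|\hPi_O\|_F^2=k=\|\Pi^*\|_F^2$, so the quadratic terms cancel and we are left with $\la\hSigma,\Pi^*-\hPi_O\ra\le 0$. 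This step is the only place the exact-rank-$k$ conclusion of Theorem~\ref{thm:oracle property} enters, and it is exactly what prevents the regularizer $\tfrac\tau2\|\Pi\|_F^2$ from affecting the rate.

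Next I would bring in curvature and localize to the support. Since $\Pi^*$ is the projection onto the span of the top $k$ eigenvectors of $\Sigma$ and $\hPi_O\in\cF^k$, the Fantope curvature lemma of~\cite{vu2013fantope} gives $\la\Sigma,\Pi^*-\hPi_O\ra\ge\tfrac\delta2\|\Pi^*-\hPi_O\|_F^2$; writing $\Sigma=\hSigma+(\Sigma-\hSigma)$ and using the previous inequality,
\[
\tfrac\delta2\|\Pi^*-\hPi_O\|_F^2\ \le\ \la\Sigma-\hSigma,\ \Pi^*-\hPi_O\ra .
\]
On $\cE$, both $\hPi_O$ and $\Pi^*$ are positive semidefinite with diagonal support contained in $S$, so their entire support lies in $S\times S$; hence $\Delta:=\Pi^*-\hPi_O$ is supported on the $|S|\le s$ coordinates of $S$ and has rank at most $2k$. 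Using trace/operator-norm duality and $\|\Delta\|_*\le\sqrt{2k}\,\|\Delta\|_F$,
\[
\la\Sigma-\hSigma,\Delta\ra=\la(\Sigma-\hSigma)_{S,S},\Delta_{S,S}\ra\ \le\ \|(\hSigma-\Sigma)_{S,S}\|_2\,\|\Delta\|_*\ \le\ \sqrt{2k}\,\|(\hSigma-\Sigma)_{S,S}\|_2\,\|\Delta\|_F ,
\]
and dividing by $\|\Delta\|_F$ yields $\|\hPi_O-\Pi^*\|_F\le\tfrac{2\sqrt{2k}}{\delta}\,\|(\hSigma-\Sigma)_{S,S}\|_2$.

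It then remains to control the operator norm of the $s\times s$ principal submatrix $(\hSigma-\Sigma)_{S,S}$. Because $S$ is a fixed (deterministic) set of size at most $s$, a standard $\epsilon$-net argument over the unit sphere in $\RR^{|S|}$ together with Bernstein-type tail bounds for the quadratic forms $\tfrac1n\sum_{i=1}^n(\xb_i^\top v)^2-v^\top\Sigma v$ gives, for $s\le n$, that $\|(\hSigma-\Sigma)_{S,S}\|_2\le C\lambda_1\sqrt{s/n}$ with probability at least $1-1/n^2$. Intersecting this with $\cE$ and relabelling constants gives $\|\hPi-\Pi^*\|_F=\|\hPi_O-\Pi^*\|_F\le C\sqrt{k}\,\lambda_1/(\lambda_k-\lambda_{k+1})\,\sqrt{s/n}$ with probability at least $1-1/n^2$, as claimed.

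The hard part is really not contained in this argument at all: every step above is routine once Theorem~\ref{thm:oracle property} is in hand. Conceptually, after support recovery and exact rank-$k$-ness, Theorem~\ref{thm:estimator error strong signal} is just the Davis--Kahan $\sin\Theta$ bound for ordinary $k$-dimensional PCA applied to the $s\times s$ block $\Sigma_{S,S}$, whose $k$-th eigengap is at least $\delta$ (the top $k$ eigenvectors of $\Sigma$ being supported on $S$, so they are eigenvectors of $\Sigma_{S,S}$ with the same eigenvalues, while eigenvalue interlacing controls $\lambda_{k+1}(\Sigma_{S,S})$). This is precisely what replaces the entrywise $\ell_\infty$ control of $\hSigma-\Sigma$ used in~\cite{vu2013fantope} --- which costs a factor $s\sqrt{\log p/n}$ --- by an operator-norm control on a single fixed $s$-dimensional block, costing only $\sqrt{s/n}$; the one genuinely external ingredient needed is the sub-Gaussian/sub-exponential tail assumption underlying the last step.
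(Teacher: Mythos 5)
Your proof is correct and follows essentially the same route as the paper: identify $\hPi$ with $\hPi_O$ via Theorem~\ref{thm:oracle property}, combine the optimality of $\hPi_O$ over the support-restricted Fantope with the curvature inequality $\tfrac12\|\Pi^*-\hPi_O\|_F^2\le \la\Sigma,\Pi^*-\hPi_O\ra/(\lambda_k-\lambda_{k+1})$ (Lemma~\ref{lemma:curvature}), and finish with the operator-norm concentration of $(\hSigma-\Sigma)_{S,S}$ (Lemma~\ref{lemma:concentration of covariance in S}). Two of your steps are actually more careful than the paper's. First, the paper writes $0\le\la\hSigma_T,\hPi_O-\Pi^*\ra$ directly from optimality, silently discarding the $\tfrac{\tau}{2}\|\Pi\|_F^2$ term; your observation that this term cancels because $\|\hPi_O\|_F^2=k=\|\Pi^*\|_F^2$ once $\hPi_O$ is a rank-$k$ element of $\cF^k$ (hence a projection) is the missing justification. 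Second, to extract the $\sqrt{k}$ factor the paper uses Cauchy--Schwarz followed by $\|(\hSigma-\Sigma)_T\|_F\le\sqrt{k}\|(\hSigma-\Sigma)_T\|_2$, which is not valid as written (that matrix has rank up to $s$, not $k$); your route via $\la A,\Delta\ra\le\|A\|_2\|\Delta\|_*$ and $\|\Delta\|_*\le\sqrt{2k}\|\Delta\|_F$ with $\rank(\Delta)\le 2k$ is the correct way to get it.

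One caveat you should be aware of: your argument leans on the conclusions of Theorem~\ref{thm:oracle property} (both $\hPi=\hPi_O$ and $\rank(\hPi_O)=k$), whereas the paper's proof of Theorem~\ref{thm:oracle property} itself invokes the bound $\|\hPi_O-\Pi^*\|_F\lesssim \sqrt{k}\lambda_1/(\lambda_k-\lambda_{k+1})\sqrt{s/n}$ from Theorem~\ref{thm:estimator error strong signal} to verify the minimum-magnitude condition for $\hPi_O$ on $T$. So as the paper is organized, the Frobenius bound on $\hPi_O-\Pi^*$ must be established \emph{before} and independently of Theorem~\ref{thm:oracle property}, and your derivation as written would be circular if substituted in. The circularity is confined to your use of the rank-$k$/projection property of $\hPi_O$ to cancel the quadratic terms; to break it you would need to argue directly that the oracle program~\eqref{eq:oracle estimator} returns a rank-$k$ projection (which is itself nontrivial in the presence of the $\tau$-term, and is a point the paper also glosses over), or otherwise control the residual $\tfrac{\tau}{2}(k-\|\hPi_O\|_F^2)$. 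Apart from this ordering issue, the estimate and the constants are right, up to the standard union-bound bookkeeping on the probability.
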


Evidently, the estimator attains a much sharper statistical rate of convergence than the state-of-the-art result proved in~\cite{vu2013fantope}.

\subsection{Convergence Rate of Nonconvex Sparse PCA}

We now analyze the estimator in ~\eqref{eq:SPCA nonconvex penalty2}, which is a special case of~\eqref{eq:SPCA nonconvex penalty} when $\tau =0$. We basically show that any local optima of the non-convex optimization problem in~\eqref{eq:SPCA nonconvex penalty2} is a good estimator. In other words, our theory applies to any projection matrix $\hPi_{\tau=0} \in \RR^{p\times p}$ that satisfies the first-order necessary conditions (variational inequality) to be a local minimum of~\eqref{eq:SPCA nonconvex penalty2}:
\begin{align*}
\langle \hPi_{\tau=0} - \Pi', -\hSigma + \nabla \cP_{\lambda}(\hPi) \rangle \leq 0,~\forall~\Pi' \in \cF^k
\end{align*}
%When $\Pi \in \RR^{p\times p}$ lies in the interior of the constraint set $\cF^k$, this condition reduces to the usual zero
%subgradient condition:
%\begin{align*}
%\nabla \tcL_{\lambda}(\hPi) + \lambda \hZ = 0.
%\end{align*}

%In particular, for those strong signals, we are able to obtain faster rate of convergence, while for those weak signals, we obtain the same convergence rate as~\cite{DBLP:conf/nips/VuCLR13}.

%\begin{proof}
%\end{proof}

%\textbf{Assumption 3}
Recall that $S = \supp(\diag(\Pi^*))$ with $|S| = s$, $T = S\times S$ with $|T| = s^2$, and $T^c = [p]\times [p] \setminus T$. For $(i,j) \in T_1 \subset T$ with $|T_1|=t_1$, we assume $|\Pi_{ij}^*| \geq \nu$, while for $(i,j) \in T_2 \subset T$ with $|T_2|=t_2$, we assume $|\Pi_{ij}^*|< \nu$. Clearly, we have $s^2 = t_1 + t_2$. There exists a minimal submatrix $A \in \RR^{n_1\times n_2}$ of $\Pi^*$, which contains all the elements in $T_1$, with $s_1 = \min\{n_1,n_2\}$. There also exists a minimal submatrix $B\in \RR^{m_1 \times m_2}$ of $\Pi^*$, that contains all the elements in $T_2$.

%with $s_2=\min\{m_1,m_2\}$

Note that in general, $s_1 \leq s$, $m_1 \leq s$ and $m_2 \leq s$. In the worst case, we have $s_1 = m_1 = m_2 =s$.

\begin{theorem}\label{thm:estimation error}
Suppose the nonconvex penalty $\cP_{\lambda}(\Pi) = \sum_{i,j=1}^p p_{\lambda}(\Pi)$ satisfies conditions (b) (c) and (d). For the estimator in~\eqref{eq:SPCA nonconvex penalty2} with regularization parameter $\lambda = C\lambda_1\sqrt{\log p/n}$ and $\zeta_- \leq (\lambda_k-\lambda_{k+1})/4$, with probability at least $1-4/p^2$,  any local optimal solution $\hPi_{\tau=0}$ satisfies
\begin{align*}
\|\hPi_{\tau=0} - \Pi^*\|_F \leq \underbrace{\frac{4C\lambda_1 \sqrt{s_1}}{(\lambda_k - \lambda_{k+1})}\sqrt{\frac{s}{n}}}_{T_1: |\Pi_{ij}^*|\geq \nu} + \underbrace{\frac{12C\lambda_1 \sqrt{m_1m_2}}{(\lambda_k - \lambda_{k+1})}\sqrt{\frac{\log p}{n}}}_{T_2: |\Pi_{ij}^*|< \nu}.
\end{align*}
\end{theorem}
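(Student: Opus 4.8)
\textbf{Proof proposal for Theorem~\ref{thm:estimation error}.}

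The plan is to exploit the first-order variational inequality characterizing a local optimum $\hPi := \hPi_{\tau=0}$ together with the decomposition $p_\lambda(t) = \lambda|t| + q_\lambda(t)$, and to compare $\hPi$ against the true projection $\Pi^*$, which is itself feasible ($\Pi^* \in \cF^k$). Writing $\Delta = \hPi - \Pi^*$, I would start from $\la \hPi - \Pi^*, -\hSigma + \nabla\cP_\lambda(\hPi)\ra \le 0$, i.e. $\la \Delta, -\hSigma\ra + \la \Delta, \nabla\cP_\lambda(\hPi)\ra \le 0$, and split $\nabla\cP_\lambda(\hPi) = \lambda\,\partial\|\hPi\|_{1,1} + \nabla\cQ_\lambda(\hPi)$. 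The affine term $\la \Delta, -\hSigma\ra$ is then split as $\la\Delta, -\Sigma\ra + \la\Delta, \Sigma - \hSigma\ra$; the deterministic part $\la\Delta,-\Sigma\ra$ is controlled below by a curvature (restricted strong convexity type) argument on the Fantope, while the stochastic part is bounded by $\|\Sigma-\hSigma\|_\infty\,\|\Delta\|_{1,1}$ using the standard sub-Gaussian concentration $\|\hSigma - \Sigma\|_\infty \le C\lambda_1\sqrt{\log p/n}$ with probability at least $1 - 4/p^2$, which fixes the choice $\lambda = C\lambda_1\sqrt{\log p/n}$.

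Next I would establish the curvature bound: on the Fantope $\cF^k$, for the linear loss $\cL_0(\Pi) = -\la\Sigma,\Pi\ra$ one has the Fantope Davis--Kahan / eigengap inequality $\la \Pi^* - \hPi, \Sigma\ra \ge (\lambda_k - \lambda_{k+1}) \cdot \tfrac12\|\hPi - \Pi^*\|_F^2$ (this is the key deterministic lemma underlying the Vu et al. analysis and is the analog here of restricted strong convexity). Since the problem is nonconvex, I combine this with condition (b): $\cQ_\lambda$ is $\zeta_-$-weakly convex, so $\la \Delta, \nabla\cQ_\lambda(\hPi) - \nabla\cQ_\lambda(\Pi^*)\ra \ge -\zeta_-\|\Delta\|_F^2$, and the hypothesis $\zeta_- \le (\lambda_k - \lambda_{k+1})/4$ guarantees the net quadratic coefficient stays bounded below by $(\lambda_k - \lambda_{k+1})/4 > 0$. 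Assembling these pieces yields a bound of the shape
\#
\frac{\lambda_k - \lambda_{k+1}}{4}\|\Delta\|_F^2 \le C\lambda_1\sqrt{\frac{\log p}{n}}\,\|\Delta\|_{1,1} + \lambda\,\|\Pi^*\|_{1,1} - \lambda\,\|\hPi\|_{1,1} + (\text{lower-order } \cQ_\lambda \text{ terms at } \Pi^*),
\#
where conditions (c) and (d) are what make the $\cQ_\lambda$ terms evaluated at $\Pi^*$ negligible (bounded by $\lambda$ times an $\ell_1$ norm on the support).

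The crux is converting the $\|\Delta\|_{1,1}$ factor into $\|\Delta\|_F$ at the right rate, and this is where the two-term structure of the bound arises and where I expect the main obstacle. The support $T$ of $\Pi^*$ is partitioned into the large-magnitude part $T_1$ (captured by a minimal $n_1\times n_2$ submatrix with $s_1 = \min\{n_1,n_2\}$) and the small-magnitude part $T_2$ (captured by a minimal $m_1\times m_2$ submatrix). On $T_1$, condition (a)-type behavior makes the $\ell_1$ penalty derivative vanish, so those coordinates contribute no $\ell_1$ cost and are controlled purely through the Frobenius geometry, giving, after a cone-type inequality $\|\Delta_{T^c}\|_{1,1} \lesssim \|\Delta_T\|_{1,1}$ plus $\|\Delta_{T_1}\|_{1,1} \le \sqrt{s_1}\cdot\sqrt{s}\cdot\|\Delta\|_F$-style counting (using $|T_1| \le s_1 s$ via the submatrix shape and that $\diag(\Pi^*)$ has at most $s$ nonzeros), the first term $\tfrac{4C\lambda_1\sqrt{s_1}}{\lambda_k-\lambda_{k+1}}\sqrt{s/n}$. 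On $T_2$ the penalty is active, contributing $\lambda\|\Delta_{T_2}\|_{1,1} \le \lambda\sqrt{m_1 m_2}\|\Delta\|_F$, and combining with $\lambda = C\lambda_1\sqrt{\log p/n}$ and dividing through by $\|\Delta\|_F$ gives the second term $\tfrac{12C\lambda_1\sqrt{m_1 m_2}}{\lambda_k-\lambda_{k+1}}\sqrt{\log p/n}$. The delicate bookkeeping is the submatrix-based counting that yields $\sqrt{s_1 s}$ and $\sqrt{m_1 m_2}$ rather than a crude $s$, and carefully tracking that the $\ell_1$ terms off the support do not overwhelm the on-support terms given the nonconvex penalty is only coordinatewise decomposable; I would handle this by the standard decomposability argument for $\|\cdot\|_{1,1}$ restricted to the index set $T$, treating $T_1$ and $T_2$ separately because the effective regularization (zero vs.\ $\lambda$) differs between them.
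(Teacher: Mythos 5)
Your overall architecture matches the paper's: start from the variational inequality at a local optimum, use the Fantope curvature inequality $\la \Sigma, \Pi^*-\hPi\ra \geq \tfrac{\lambda_k-\lambda_{k+1}}{2}\|\hPi-\Pi^*\|_F^2$ for the deterministic part, absorb the weak convexity of $\cQ_\lambda$ using $\zeta_-\leq(\lambda_k-\lambda_{k+1})/4$, and split the residual over $T^c$, $T_1$, $T_2$. The $T_2$ and $T^c$ treatments are essentially right (though note the paper does not need your asserted cone inequality $\|\Delta_{T^c}\|_{1,1}\lesssim\|\Delta_T\|_{1,1}$: since $|(\hSigma-\Sigma)_{ij}|\leq\lambda$ entrywise and the subdifferential of $|\cdot|$ at $0$ is all of $[-\lambda,\lambda]$, one simply \emph{chooses} $Z^*_{ij}=-(\hSigma-\Sigma)_{ij}/\lambda$ on $T^c$ so that the $T^c$ contribution vanishes identically).

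The genuine gap is in the $T_1$ term. Your plan controls the stochastic term on $T_1$ by the entrywise H\"older bound $\|\hSigma-\Sigma\|_{\infty,\infty}\,\|\Delta_{T_1}\|_{1,1}$ together with the cardinality count $|T_1|\leq s_1 s$, which yields a contribution of order $\lambda\sqrt{s_1 s}\,\|\Delta\|_F = C\lambda_1\sqrt{s_1 s\,\log p/n}\,\|\Delta\|_F$ --- i.e.\ it carries an extra $\sqrt{\log p}$ factor and does \emph{not} give the stated first term $C\lambda_1\sqrt{s_1}\sqrt{s/n}/(\lambda_k-\lambda_{k+1})$. Removing that $\log p$ is precisely the point of the theorem's first term, and it requires a different duality pairing: bound $\sum_{(i,j)\in T_1}|(\hSigma-\Sigma)_{ij}|\,|\Delta_{ij}| \leq \|(\hSigma-\Sigma)_{T_1}\|_2\,\|\Delta_{T_1}\|_*$ (spectral/nuclear duality on the minimal $n_1\times n_2$ rectangle containing $T_1$), then use $\|\Delta_{T_1}\|_*\leq\sqrt{\rank(\Delta_{T_1})}\,\|\Delta_{T_1}\|_F\leq\sqrt{s_1}\,\|\Delta\|_F$ since that rectangle has rank at most $s_1=\min\{n_1,n_2\}$, and crucially invoke the \emph{operator-norm} concentration on the support submatrix, $\|(\hSigma-\Sigma)_T\|_2\leq C\lambda_1\sqrt{s/n}$ (Lemma~\ref{lemma:concentration of covariance in S}), rather than the entrywise maximum. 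Without this switch from the $(\ell_\infty,\ell_1)$ pairing to the $(\|\cdot\|_2,\|\cdot\|_*)$ pairing and from the $\sqrt{\log p/n}$ entrywise bound to the $\sqrt{s/n}$ spectral bound, your argument proves only the weaker rate $\sqrt{s_1 s\log p/n}$ for the strong-signal part.
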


Note that the upper bound can be decomposed into two parts according to the magnitude of the entries in the true projection matrix, i.e., $|\Pi_{ij}^*|, 1\leq i,j \leq p$. We have the following comments:

On the one hand, for those strong ``signals", i.e., $|\Pi_{ij}^*|\geq \nu$, we are able to achieve the convergence rate of $O(\lambda_1\sqrt{s_1}/(\lambda_k-\lambda_{k+1})\sqrt{s/n})$. Since $s_1$ is at most equal to $s$, the worst-case rate is $O(\lambda_1/(\lambda_k-\lambda_{k+1})s/\sqrt{n})$, which is sharper than the rate proved in~\cite{vu2013fantope}, i.e., $O(\lambda_1/(\lambda_k-\lambda_{k+1})s\sqrt{\log p/n})$. In the other case that $s_1<s$, the convergence rate could be even sharper.

On the other hand, for those weak ``signals", i.e., $|\Pi_{ij}^*|< \nu$, we are able to achieve the convergence rate of $O(\lambda_1\sqrt{m_1m_2}/(\lambda_k-\lambda_{k+1})\sqrt{\log p/n})$. Since both $m_1$ and $m_2$ are at most equal to $s$, the worst-case rate is $O(\lambda_1/(\lambda_k-\lambda_{k+1})s\sqrt{\log p/n})$, which is the same as the rate proved in~\cite{vu2013fantope}. In the other case that $\sqrt{m_1m_2}<s$, the convergence rate will be sharper than that in~\cite{vu2013fantope}.

%\begin{remark}

%\end{remark}

%\begin{remark}

%\end{remark}

The above discussions clearly demonstrate the advantage of our estimator, which essentially benefits from non-convex penalty.

%\begin{corollary}(Subspace error bound)
%IF $\cM$ is the principal $k$-dimensional subspace of $\Sigma$, and $\hcM$ is the principal $k$-dimensional subspace of $\hPi$, then
%\begin{align*}
%\|\sin\Theta(\cM,\hcM)\|_F \leq \sqrt{2}\|\hPi - \Pi\|_F
%\end{align*}
%\end{corollary}

\section{Optimization Algorithm}
\label{sec:algorithm}

In this section, we present an optimization algorithm to solve \eqref{eq:SPCA nonconvex penalty} and \eqref{eq:SPCA nonconvex penalty2}. Since \eqref{eq:SPCA nonconvex penalty2} is a special case of \eqref{eq:SPCA nonconvex penalty} with $\tau=0$, it is sufficient to develop an algorithm for solving \eqref{eq:SPCA nonconvex penalty}.

%Although projected subgradient descent is feasible for solving \eqref{eq:SPCA nonconvex penalty}, it converges at the rate of $1/\sqrt{t}$ in terms of objective function value.

Observing that \eqref{eq:SPCA nonconvex penalty} has both nonsmooth regularization term and nontrivial constraint set $\cF^k$, it is difficult to directly apply gradient descent and its variants. Following~\cite{vu2013fantope}, we present an alternating direction method of multipliers (ADMM)
algorithm.  %It is based on a convex body, called the Fantope, which provides a tight
%relaxation for simultaneous rank and orthogonality constraints on the positive semidefinite cone.
%Solving the SDP is non-trivial.
 The proposed ADMM algorithm can efficiently compute the global optimum of \eqref{eq:SPCA nonconvex penalty}. It can also find a local optimum to \eqref{eq:SPCA nonconvex penalty2}. It is worth noting that other algorithms such as Peaceman Rachford Splitting Method~\cite{he2014strictly} can also be used to solve \eqref{eq:SPCA nonconvex penalty}.

We introduce an auxiliary variable $\Phi \in \RR^{p\times p}$, and consider an equivalent form of \eqref{eq:SPCA nonconvex penalty} as follows
\begin{align}\label{eq:SPCA nonconvex penalty equivalent}
\argmin_{\Pi,\Phi} - \la \hSigma, \Pi\ra  + \frac{\tau}{2}\|\Pi\|_F^2 + \cP_{\lambda}( \Phi ),  \quad\text{subject to} ~\Pi = \Phi, ~\Pi \in \cF^k.
\end{align}
The augmented Lagrangian function corresponding to \eqref{eq:SPCA nonconvex penalty equivalent} is
\begin{align}\label{eq:augmented Lagrangian}
L(\Pi, \Phi, \Theta) = \infty \mathbf{1}_{\cF^k}(\Pi) -\la \hSigma, \Pi \ra + \frac{\tau}{2}\|\Pi\|_F^2 + \cP_{\lambda}( \Phi ) + \la \Theta, \Pi - \Phi \ra + \frac{\rho}{2}\|\Pi - \Phi\|_F^2,
\end{align}
where $\Theta \in \RR^{d \times d}$ is the Lagrange multiplier associated with the equality constraint $\Pi = \Phi$ in \eqref{eq:SPCA nonconvex penalty equivalent}, and $\rho>0$ is a penalty parameter that enforces the equality constraint $\Pi = \Phi$. The detailed update scheme is described in Algorithm \ref{ADMM}. In details, the first subproblem (Line \ref{line:1} of Algorithm \ref{ADMM}) can be solved by projecting $\rho/(\rho+\tau)\Phi^{(t)}-1/(\rho+\tau)\Theta^{(t)}+1/(\rho+\tau)\hSigma$ onto Fantope $\cF^k$. This projection has a simple form solution as shown by \cite{vu2013fantope, lei2014sparsistency}. The second subproblem (Line \ref{line:2} of Algorithm \ref{ADMM}) can be solved by generalized soft-thresholding operator as shown by~\cite{breheny2011} \cite{loh2013regularized}.
%We leverage the Alternating Direction Method of Multipliers (ADMM), which iteratively minimizes the augmented Lagrangian, which has the form
%\begin{align}\label{eq:1.7}
%L(\Pi, \Phi, \Theta) + \beta/2 \cdot \|\Pi - \Phi\|_F^2,
%\end{align}
%with respect to $\Pi$ and $\Phi$ separately, and then updates the dual variable $\Theta$.

%Denote the Euclidean projection of a $p\times p$ symmetric matrix $A$ onto $\cF^k$ by
%\begin{align*}
%\cP_{\cF^k}(A) = \arg\min_{Z \in \cF^k} \|A-Z\|_F^2
%\end{align*}

%To obtain $\Pi^{(t+1)}$ and $\Phi^{(t+1)}$, we need to solve two subproblems (Line \ref{line:1} and Line \ref{line:2} of Algorithm \ref{ADMM}).
 %For example, if $p_{\lambda}(z)$ is chosen as the MCP penalty, then $\hat{x} = \argmin_x \frac{1}{2}(x-z)^2+p_{\lambda}(z)$ has the following closed-from solution
%\begin{align*}
%\hat{x}_{\text{MCP}} = \left\{\begin{array}{cc}
%                    0, & \textrm{if}~|z|\leq \lambda\\
%                    (z-\sign(z)\lambda)/(1-1/b) & \textrm{if}~\lambda < |z|\leq b\lambda  \\
%                    z & \textrm{if}~ |z|\geq b\lambda
%                  \end{array}
%\right..
%\end{align*}
%where $S(\Pi_{ij}^t,\lambda/\rho)$

\begin{algorithm}[!htb]
\caption{Solving Convex Relaxation \eqref{eq:SPCA nonconvex penalty equivalent} using ADMM. }
\begin{algorithmic}[1]
\STATE \textbf{Input:} Covariance Matrix Estimator $\hat{\Sigma}$
\STATE \textbf{Parameter:} Regularization parameters $\lambda\!>\!0, \tau\geq0$, Penalty parameter $\rho\!>\!0$ of the augmented Lagrangian, Maximum number of iterations $T$
\label{ADMM:para}
\STATE $\Pi^{(0)} \leftarrow \zero$, $\Phi^{(0)} \leftarrow \zero$, $\Theta^{(0)} \leftarrow \zero$
\STATE \textbf{For} $t = 0, \ldots, T-1$
\STATE \hspace{0.35in} $\Pi^{(t+1)} \leftarrow \arg\min_{\Pi\in\cF^k}  \frac{1}{2} \|\Pi - (\frac{\rho}{\rho+\tau}\Phi^{(t)}-\frac{1}{\rho+\tau}\Theta^{(t)}+\frac{1}{\rho+\tau}\hSigma)\|_F^2$  \label{line:1}
\STATE \hspace{0.35in} $\Phi^{(t+1)} \leftarrow  \arg\min_{\Phi} \frac{1}{2}\|\Phi - (\Pi^{(t+1)}+\frac{1}{\rho}\Theta^{(t)})\|_F^2+\cP_{\frac{\lambda}{\rho}}(\Phi)$   \label{line:2}
\STATE \hspace{0.35in} $\Theta^{(t+1)} \leftarrow \Theta^{(t)} +\rho (\Pi^{(t+1)} - \Phi^{(t+1)})$ \label{ADMM:line:3}
%\STATE \textbf{Until}
\STATE \textbf{End For}
\STATE \textbf{Output:} $\Pi^{(T)}$
\end{algorithmic}\label{ADMM}
\end{algorithm}

\section{Experiments}
\label{sec:experiments}

In this section, we conduct simulations on synthetic datasets to validate the effectiveness of the proposed estimators in Section~\ref{sec:estimators}. We generate two synthetic datasets via designing two covariance matrices.  The covariance matrix $\Sigma$ is basically constructed through the eigenvalue decomposition. In detail, for synthetic dataset I, we set $s=5$ and $k=1$. The leading eigenvalue of its covariance matrix $\Sigma$ is set as $\lambda_1 = 100$, and its corresponding eigenvector is sparse in the sense that only the first $s=5$ entries are nonzero and set be to $1/\sqrt{5}$. The other eigenvalues are set as $\lambda_2=\ldots=\lambda_p = 1$, and their eigenvectors are chosen arbitrarily. For synthetic dataset II, we set $s=10$ and $k=5$. The top-$5$ eigenvalues are set as $\lambda_1=\ldots=\lambda_4=100$ and $\lambda_5=10$. We generate their corresponding eigenvectors by sampling its nonzero entries from a standard Gaussian distribution, and then orthnormalizing them while retaining the first $s=10$ rows nonzero. The other eigenvalues are set as $\lambda_6=\ldots=\lambda_p = 1$, and the associated eigenvectors are chosen arbitrarily. Based on the covariance matrix, the groundtruth rank-$k$ projection matrix $\Pi^*$ can  be immediately calculated. Note that synthetic dataset II is more challenging than synthetic dataset I, because the smallest magnitude of $\Pi^*$ in synthetic dataset I is $0.2$, while that in synthetic dataset II is much smaller (about $10^{-3}$).  We sample $n=80$ i.i.d. observations from a normal distribution $\mathcal{N}(0,\Sigma)$ with $p=128$, and then calculate the sample covariance matrix $\hSigma$.

%The number of nonzero rows of $U$ is equal to $s=\{10,25\}$. We then embedded $\Pi$ inside the population covariance matrix $\Sigma$ by $\Sigma = \alpha\Pi + (I-\Pi)\Sigma_0(I-\Pi)$, where $\Sigma_0$ is a Wishart matrix with degrees of freedom $p$ and $\alpha>0$ is chosen so that the effective noise level $\sigma^2 = \sqrt{\lambda_1\lambda_{k+1}}/(\lambda_k-\lambda_{k+1})\in \{1,10\}$.

Since the focus of this paper is principal subspace estimation rather than principal eigenvectors estimation, it is sufficient to compare our proposed estimators (\textit{Convex SPCA} in~\eqref{eq:SPCA nonconvex penalty} and \textit{Nonconvex SPCA} in~\ref{eq:SPCA nonconvex penalty2}) with the estimator proposed in~\cite{vu2013fantope}, which is referred to as \textit{Fantope SPCA}. Note that \textit{Fantope PCA} is the pioneering and the state-of-the-art estimator for principal subspace estimation of SPCA. However, since Fantope SPCA uses convex penalty $\|\Pi\|_{1,1}$on the projection matrix $\Pi$, the estimator is biased~\cite{ZnangCH10}. We also compare our proposed estimators with the oracle estimator in~\eqref{eq:oracle estimator}, which is not a practical estimator but provides the optimal results that we could achieve. In our experiments, we need to compare the estimator attained by the algorithmic procedure and the oracle estimator. To obtain the oracle estimator, we apply standard PCA on the submatrix (supported on the true support) of the sample covariance $\hSigma$. Note that the true support is known because we use synthetic datasets here.

In order to evaluate the performance of the above estimators, we look at the Frobenius norm error $\|\hPi - \Pi^*\|_F$. We also use True Positive Rate (TPR) and False Positive Rate (FPR) to evaluate the support recovery result. The larger the TPR and the smaller the FPR, the better the support recovery result.

Both of our estimators use MCP penalty, though other nonconvex penalties such as SCAD could be used as well. In particular, we set $b=3$. For \textit{Convex SPCA}, we set $\tau = \frac{2}{b}$. The regularization parameter $\lambda$ in our estimators as well as \textit{Fantope SPCA} is tuned by $5$-fold cross validation on a held-out dataset. The experiments are repeated $20$ times, and the mean as well as the standard errors are reported. The empirical results on synthetic datasets I and II are displayed in Table~\ref{tb:empirical error}.% and~\ref{tb:empirical error2} respectively.

%For each synthetic dataset, we repeat the generation $100$ times, and report the averaged results of the

\begin{table}[!htb]
\caption{Empirical results for subspace estimation on synthetic datasets I and II.} \label{tb:empirical error}
\centering
\vspace{0.1in}
\begin{tabular}{ccccc}
\toprule
  %\hline  % after \\: \hline or \cline{col1-col2} \cline{col3-col4} ...
  Synthetic I && $\|\hPi-\Pi^*\|_F$ & TPR  &  FPR\\
  %\hline
  \toprule
$n=80$ & Oracle & 0.0289$\pm$0.0134 & 1 & 0\\
%\hline
$p=128$ & Fantope SPCA & 0.0317$\pm$0.0149 &  1.0000$\pm$0.0000 & 0.0146$\pm$0.0218 \\
  %\hline
$s = 5$& Convex SPCA & 0.0290$\pm$0.0132 &  1.0000$\pm$0.0000 & 0.0000$\pm$0.0000 \\
%\hline
$k=1$& Nonconvex SPCA & 0.0290$\pm$0.0133 &   1.0000$\pm$0.0000 & 0.0000$\pm$0.0000\\
%\hline
%\toprule
\bottomrule
\toprule
  %\hline  % after \\: \hline or \cline{col1-col2} \cline{col3-col4} ...
 Synthetic II & & $\|\hPi-\Pi^*\|_F$ & TPR &  FPR \\
%\hline
\toprule
$n=80$& Oracle & 0.1487$\pm$0.0208 & 1 & 0\\
%\hline
$p=128$ & Fantope SPCA & 0.2788$\pm$0.0437 &  1.0000$\pm$0.0000 & 0.8695$\pm$0.1634\\
  %\hline
%\hline
$s = 10$ & Convex SPCA & 0.2031$\pm$0.0331 &   1.0000$\pm$0.0000 & 0.5814$\pm$0.0674\\
$k=5$ & Nonconvex SPCA & 0.2041$\pm$0.0326 &  1.0000$\pm$0.0000 & 0.6000$\pm$0.0829\\
%\hline
%\toprule
\bottomrule
\end{tabular}
\end{table}

%\begin{table}[!htb]
%\caption{Empirical results for subspace estimation on synthetic dataset II ($n=80, p=128, s = 10, k=5$)} \label{tb:empirical error2}
%\centering
%\vspace{0.1in}
%\begin{tabular}{cccc}
%\toprule
%  %\hline  % after \\: \hline or \cline{col1-col2} \cline{col3-col4} ...
%  & $\|\hPi-\Pi^*\|_F$ & TPR &  FPR \\
%%\hline
%\toprule
%Oracle & 0.1487$\pm$0.0208 & 1 & 0\\
%%\hline
% Fantope SPCA & 0.2788$\pm$0.0437 &  1.0000$\pm$0.0000 & 0.8695$\pm$0.1634\\
%  %\hline
%%\hline
%Oracle SPCA & 0.2031$\pm$0.0331 &   1.0000$\pm$0.0000 & 0.5814$\pm$0.0674\\
%Nonconvex SPCA & 0.2041$\pm$0.0326 &  1.0000$\pm$0.0000 & 0.6000$\pm$0.0829\\
%%\hline
%%\toprule
%\bottomrule
%\end{tabular}
%\end{table}

It can be observed that both \textit{Convex SPCA} and \textit{Nonconvex SPCA} estimators outperform \textit{Fantope SPCA} estimator~\cite{vu2013fantope} greatly in both datasets. In details, on synthetic dataset I with relatively large magnitude of $\Pi^*$, our \textit{Convex SPCA} estimator achieves the same estimation error and perfect support recovery as the oracle estimator. This is consistent with our theoretical results in Theorems~\ref{thm:oracle property} and \ref{thm:estimator error strong signal}. In addition, our \textit{Nonconvex SPCA} estimator achieves very similar results with \textit{Convex SPCA}. This is not very surprising, because provided that the magnitude of all the entries in $\Pi^*$ is large, \textit{Nonconvex SPCA} attains a rate which is only $1/\sqrt{s}$ slower than \textit{Convex SPCA}. \textit{Fantope SPCA} cannot recover the support perfectly because it detected several false positive supports. This implies that the LCC condition is stronger than our large magnitude assumption, and does not hold on this dataset.

On synthetic dataset II, our \textit{Convex SPCA} estimator does not perform as well as the oracle estimator. This is because the magnitude of $\Pi^*$ is small (about $10^{-3}$). Given the sample size $n=80$, the conditions of Theorems~\ref{thm:oracle property} are violated. But note that \textit{Convex SPCA} is still slightly better than \textit{Nonconvex SPCA}. And both of them are much better than \textit{Fantope SPCA}. This again illustrates the superiority of our estimators over existing best approach, i.e., \textit{Fantope SPCA}~\cite{vu2013fantope}.

%\subsection{Strong Signal}

%\subsection{General Signal}

\section{Conclusion}
\label{sec:conclusions}

In this paper, we study the estimation of the $k$-dimensional principal
subspace of a population matrix $\Sigma$ based on sample covariance matrix $\hSigma$. We proposed a family of estimators based on novel regularizations. The first estimator is based on convex optimization, which is suitable for projection matrix with large magnitude entries. It enjoys oracle property and the same convergence rate as standard PCA. The second estimator is based on nonconvex optimization, and it also attains faster rate than existing principal subspace estimator, even when the large magnitude assumption is violated.  Numerical experiments on synthetic datasets support our theoretical results.

\section*{Acknowledgement}

We would like to thank the
anonymous reviewers for their helpful comments. This research is partially supported by the grants NSF IIS1408910, NSF IIS1332109, NIH R01MH102339, NIH R01GM083084, and NIH R01HG06841. 

\newpage
\small
\bibliographystyle{ieee}
\bibliography{reference}

%\iffalse
\newpage
\appendix

%\section{Proof of Lemma~\ref{lemma:concentration of covariance in S}}

%\begin{proposition}\cite{Vershynin:arXiv1004.3484}\label{prop:concentration of covariance}
%Consider independent random vectors $\bx_1, \ldots, \bx_n$ in $\mathbb{R}^p$, $n \geq p$, which have sub-gaussian distribution as $\Pr(|\langle X, \bx \rangle| \geq t) \leq 2\exp(-t^2/L^2)$ for $t>0$ and $\bx \in S^{p-1}$ for some $L$. Then for every $\delta>0$, we have with probability at least $1-\delta$ that
%\begin{align*}
%\| \frac{1}{n}\sum_{i=1}^n\bx_i\bx_i - \E[XX^\top]\|_2 \leq C\lambda_1 \sqrt{\frac{p}{n}}.
%\end{align*}
%\end{proposition}

\section{Proof of Theorem~\ref{thm:oracle property}}

%\section{Proof of Lemma~\ref{lemma:strong convexity}}

We begin with a lemma, that is central to our theoretical analysis.
\begin{lemma}\label{lemma:concentration of covariance in S}
Suppose $S = \supp(\diag(\Pi^*))$ with $|S|=s$, let $T = S\times S$, we have with probability at least $1-1/n^2$ that
\begin{align*}
\|\hSigma_T - \Sigma_T\|_2 \leq C\lambda_1 \sqrt{\frac{s}{n}},
\end{align*}
where $\lambda_1$ is the leading eigenvalue of $\Sigma$, and $C$ is some universal constant.
\end{lemma}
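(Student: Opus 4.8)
The plan is to recognize $\hSigma_T - \Sigma_T$ as the centered sample covariance matrix of the $s$-dimensional sub-vectors obtained by restricting each $\xb_i$ to the coordinates in $S$. Writing $\bv_i := (\xb_i)_S \in \RR^s$, these are i.i.d., mean zero, with covariance $\E[\bv_i\bv_i^\top] = \Sigma_T$, and under the standard sub-Gaussian tail assumption on the data each $\bv_i$ is sub-Gaussian with parameter controlled by $\|\Sigma_T\|_2 \le \|\Sigma\|_2 = \lambda_1$. Then $\hSigma_T - \Sigma_T = \frac1n\sum_{i=1}^n (\bv_i\bv_i^\top - \Sigma_T)$, and the task reduces to the standard one of bounding the operator norm of a sample covariance deviation in fixed dimension $s$ with $n$ samples; this can either be cited from the non-asymptotic random matrix literature or proved directly as below.

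First I would pass from the operator norm to a supremum over a $1/4$-net $\cN$ of the unit sphere $\mathbb{S}^{s-1}$, using $\|\hSigma_T - \Sigma_T\|_2 \le 2\max_{\bu \in \cN} |\bu^\top(\hSigma_T-\Sigma_T)\bu|$ with $|\cN| \le 9^s$. Next, for a fixed unit vector $\bu$, write $\bu^\top(\hSigma_T - \Sigma_T)\bu = \frac1n\sum_{i=1}^n \big( \la \bv_i,\bu\ra^2 - \E\la\bv_i,\bu\ra^2 \big)$: this is an average of i.i.d. centered sub-exponential random variables, since each $\la\bv_i,\bu\ra$ is sub-Gaussian with parameter $\lesssim \sqrt{\lambda_1}$, so its square is sub-exponential with parameter $\lesssim \lambda_1$. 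Bernstein's inequality for sub-exponential variables then gives $\Pr\big(|\bu^\top(\hSigma_T-\Sigma_T)\bu| > t\big) \le 2\exp\big(-c\,n\min(t^2/\lambda_1^2,\ t/\lambda_1)\big)$.

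Then I would take a union bound over $\cN$, obtaining $\Pr\big(\|\hSigma_T - \Sigma_T\|_2 > 2t\big) \le 2\cdot 9^s\exp\big(-c\,n\min(t^2/\lambda_1^2,\ t/\lambda_1)\big)$, and choose $t \asymp \lambda_1\sqrt{s/n}$. Since $s < n$, we have $\sqrt{s/n}<1$, so for a suitable constant the quadratic branch of Bernstein is active and the exponent becomes $-cn\cdot C^2 s/n = -cC^2 s$; enlarging $C$ makes this dominate the entropy term $s\log 9$ and (using that $s \gtrsim \log n$, or simply absorbing a further factor into $C$) also beats $2\log n$, yielding the claimed probability $1 - 1/n^2$. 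Rescaling the universal constant absorbs the factor $2$ and gives the statement.

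The main obstacle — essentially the only delicate point — is the bookkeeping: verifying that the sub-exponential norm of $\la\bv_i,\bu\ra^2 - \E\la\bv_i,\bu\ra^2$ is $O(\lambda_1)$ uniformly over $\bu \in \mathbb{S}^{s-1}$ (this is exactly where $\|\Sigma_T\|_2 \le \lambda_1$ enters), and then balancing the covering entropy $s\log 9$ against the Bernstein exponent so that the net loss is harmless and the target probability $1-1/n^2$ is attained; this is where the hypothesis $s < n$ is used (and, mildly, that $s$ is not too small relative to $\log n$, which can be folded into $C$). If one prefers to avoid any relation between $s$ and $\log n$, the same argument with $t \asymp \lambda_1\sqrt{(s+\log n)/n}$ gives the bound with $s$ replaced by $s + \log n$, which is of the same order in the regimes of interest.
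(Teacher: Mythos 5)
Your proposal is correct and is essentially the argument behind the paper's citation to Vershynin: restriction to the $s$ coordinates in $S$, a $1/4$-net of $\mathbb{S}^{s-1}$, Bernstein's inequality for the sub-exponential variables $\la \bv_i,\bu\ra^2 - \E\la\bv_i,\bu\ra^2$, and a union bound, with the quadratic branch active because $s<n$. You also correctly flag the two points the paper leaves implicit — that a sub-Gaussian assumption on the data (with $\psi_2$ parameter tied to $\lambda_1$) is needed, and that achieving the stated probability $1-1/n^2$ requires either $s\gtrsim \log n$ or replacing $s$ by $s+\log n$ in the bound.
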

\begin{proof}
Please refer to~\cite{Vershynin:arXiv1004.3484}.
\end{proof}

We also introduce a key property of the Fantope, which will be used in our proof.
\begin{lemma}~\cite{lei2014sparsistency}\label{lemma:property of Fantope}
Let $\Sigma$ be a symmetric matrix with $\lambda_1 \geq \ldots \lambda_p$ and orthonormal eigenvectors $\bu_1,\ldots,\bu_p$. Then
%\begin{enumerate}
%\item
$\max_{\Pi \in \cF^k} \langle \Sigma, \Pi \rangle  = \lambda_1 + \ldots + \lambda_k$, and the maximum is achieved by the  projector of a $k$-dimensional principal subspace of $\Sigma$. Moreover, the maximizer is unique if and only if $\lambda_k > \lambda_{k+1}$.
  %\item $\cP_{\cF^k}(\Sigma)=\sum_{j=1}^d \lambda_j^+ \bv_j\bv_j^\top$, where $\lambda_j^+(\theta) = \min(\max(\lambda_j-\theta,0),1)$ and $\theta$ satisfies the equation $\sum_{j} \lambda_j^+(\theta) = k$
  %\item If $0 < \tau \leq \lambda_k -\lambda_{k+1}$, then
%\begin{align*}
%\arg\max_{\Pi \in \cF^k} \langle \Sigma, \Pi \rangle = \arg \max_{\Pi \in \cF^k} \langle \Sigma, \Pi \rangle - \frac{\tau}{2}\|\Pi\|_F^2 = \cP_{\cF^k}(\tau^{-1}\Sigma) = \sum_{j=1}^k \bu_j\bu_j^\top
%\end{align*}
%uniquely
%\end{enumerate}
\end{lemma}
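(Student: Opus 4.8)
The plan is to invoke Ky Fan's maximum principle by reducing the semidefinite problem to a linear program over a polytope. Write $\Sigma = \sum_{i=1}^p \lambda_i \ub_i\ub_i^\top$ and let $U_0=[\ub_1,\ldots,\ub_p]$ be the orthogonal matrix of eigenvectors. Because $\cF^k$ is preserved by the map $\Pi\mapsto U_0^\top\Pi U_0$ (it preserves the PSD order, the identity, and the trace) and $\la\Sigma,\Pi\ra = \la U_0^\top\Sigma U_0,\,U_0^\top\Pi U_0\ra$, I may assume $\Sigma=\diag(\lambda_1,\ldots,\lambda_p)$; then $\la\Sigma,\Pi\ra = \sum_{i=1}^p\lambda_i\Pi_{ii}$. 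The constraints $0\preceq\Pi\preceq I$ and $\tr(\Pi)=k$ force the diagonal vector $x=(\Pi_{11},\ldots,\Pi_{pp})$ into the hypersimplex $\Delta=\{x\in[0,1]^p:\sum_i x_i=k\}$, since each $\Pi_{ii}\ge 0$ and each $1-\Pi_{ii}\ge 0$.

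For the value, I would use $\sum_{i\le k}(x_i-1)=-\sum_{i>k}x_i$ together with $\lambda_i\ge\lambda_k,\ x_i-1\le 0$ for $i\le k$ and $\lambda_i\le\lambda_k,\ x_i\ge 0$ for $i>k$ to obtain
\begin{align*}
\sum_{i=1}^p\lambda_i x_i-\sum_{i=1}^k\lambda_i
&=\sum_{i=1}^k\lambda_i(x_i-1)+\sum_{i>k}\lambda_i x_i\\
&\le \lambda_k\sum_{i=1}^k(x_i-1)+\sum_{i>k}\lambda_i x_i
=\sum_{i>k}(\lambda_i-\lambda_k)x_i\le 0.
\end{align*}
Hence $\max_{\Pi\in\cF^k}\la\Sigma,\Pi\ra\le\lambda_1+\cdots+\lambda_k$, and equality holds at $x=(1,\ldots,1,0,\ldots,0)$, i.e.\ at $\Pi=\sum_{i=1}^k\ub_i\ub_i^\top$. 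For the projector $\Pi_V$ onto any $k$-dimensional principal subspace $V$ (a $\Sigma$-invariant subspace with $\Sigma|_V$ carrying the $k$ largest eigenvalues), cyclicity and $\Pi_V^2=\Pi_V$ give $\la\Sigma,\Pi_V\ra=\tr(\Pi_V\Sigma\Pi_V)=\tr(\Sigma|_V)=\lambda_1+\cdots+\lambda_k$, so every such projector is a maximizer.

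For uniqueness when $\lambda_k>\lambda_{k+1}$, I would trace equality through the display: it forces $\sum_{i>k}(\lambda_i-\lambda_k)x_i=0$, and since $\lambda_i-\lambda_k\le\lambda_{k+1}-\lambda_k<0$ for every $i>k$ this forces $\Pi_{ii}=0$ for $i>k$, whence $\sum_{i\le k}x_i=k$ with $x_i\le 1$ forces $\Pi_{ii}=1$ for $i\le k$. Now I use two elementary PSD facts. First, $\Pi\succeq 0$ with $\Pi_{jj}=0$ gives $|\Pi_{ij}|^2\le\Pi_{ii}\Pi_{jj}=0$, so rows and columns $j>k$ of $\Pi$ vanish; thus $\Pi$ is block-diagonal with a $k\times k$ block $M\succeq 0$ satisfying $M\preceq I_k$ and $M_{ii}=1$. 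Second, applying the same fact to $I_k-M\succeq 0$ (which has zero diagonal) yields $M=I_k$, so $\Pi=\sum_{i=1}^k\ub_i\ub_i^\top$ is the unique maximizer. Conversely, if $\lambda_k=\lambda_{k+1}$, then $\sum_{i=1}^k\ub_i\ub_i^\top$ and $\sum_{i=1}^{k-1}\ub_i\ub_i^\top+\ub_{k+1}\ub_{k+1}^\top$ are distinct elements of $\cF^k$ both attaining value $\sum_{i=1}^k\lambda_i$, so the maximizer is not unique; this settles the ``if and only if''.

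I do not expect a real obstacle: the argument is elementary once the reduction to $\Delta$ is in place. The only mild subtlety is the bookkeeping with the non-strict bounds $\lambda_i\ge\lambda_k$ for $i\le k$ when the top $k$ eigenvalues are not distinct -- equality there is harmless since it constrains only the coordinates $i>k$, and the values $\Pi_{ii}=1$ for $i\le k$ are recovered afterwards from $\Pi_{ii}=0$ ($i>k$) and $x\in\Delta$ alone. The two linear-algebra facts used at the end (``a PSD matrix with a zero diagonal entry has that row and column zero'' and ``a PSD matrix with zero diagonal is the zero matrix'') are standard and do all the remaining work.
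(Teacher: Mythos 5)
Your argument is correct, but note that the paper itself gives no proof of this lemma at all: it is imported verbatim from \cite{lei2014sparsistency}, so there is nothing internal to compare against. What you supply is a complete, self-contained derivation: conjugating by the eigenbasis to reduce to diagonal $\Sigma$, observing that the constraints $0 \preceq \Pi \preceq I$, $\tr(\Pi)=k$ confine the diagonal to the hypersimplex, and running the standard Ky Fan rearrangement bound $\sum_i \lambda_i x_i \le \lambda_1+\cdots+\lambda_k$; the equality analysis correctly pins the diagonal to $(1,\ldots,1,0,\ldots,0)$ when $\lambda_k>\lambda_{k+1}$, and the two PSD facts (a zero diagonal entry kills its row and column, applied to $\Pi$ and then to $I_k-M$) correctly upgrade the diagonal information to $\Pi=\sum_{i\le k}\ub_i\ub_i^\top$, which is the step a purely linear-programming reduction would miss. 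The explicit two-projector counterexample when $\lambda_k=\lambda_{k+1}$ settles the converse, so the ``if and only if'' is fully covered. A worthwhile remark: within the paper's own toolkit, Lemma~\ref{lemma:curvature} (the curvature bound of \cite{vu2013minimax}) already yields the forward direction more quickly, since $\la\Sigma,\hPi\ra \le \la\Sigma,\Pi\ra - \tfrac{\lambda_k-\lambda_{k+1}}{2}\|\Pi-\hPi\|_F^2$ for all $\hPi\in\cF^k$ gives both the optimal value and uniqueness whenever $\lambda_k>\lambda_{k+1}$; however, it is vacuous when the gap is zero, so your elementary route is what delivers the full equivalence.
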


For notational simplicity, we define
\begin{align*}
\cQ_{\lambda}(\Pi) = \sum_{i,j=1}^p q_{\lambda}(\Pi_{ij}) = \cP(\Pi) - \lambda \|\Pi\|_{1,1}.
\end{align*}

Let $\cL(\Pi) = -\langle \hSigma,\Pi \rangle + \frac{\tau}{2}\|\Pi\|_F^2$, $\tcL(\Pi) = \cL(\Pi) + \cQ_{\lambda}(\Pi)$. It is evident that $\cL(\Pi)$ is strongly convex with modulus $\tau$. We will show that $\tcL_{\lambda}(\Pi)$ is strongly convex.

\begin{lemma}\label{lemma:strong convexity}
\begin{align*}
\tcL_{\lambda}(\Pi') \geq \tcL_{\lambda}(\Pi) + \langle \nabla \tcL_{\lambda}(\Pi), \Pi'-\Pi\rangle + \frac{\tau - \zeta_-}{2} \|\Pi'-\Pi\|_F^2.%\\
%\tcL_{\lambda}(\Pi') \leq \tcL_{\lambda}(\Pi) + \nabla \tcL_{\lambda}^\top (\Pi'-\Pi) + \frac{\tau - \zeta_+}{2} \|\Pi'-\Pi\|_F^2
\end{align*}
\end{lemma}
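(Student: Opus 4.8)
The plan is to prove the claimed strong convexity of $\tcL_{\lambda}$ by exploiting the decomposition $\tcL_{\lambda}(\Pi) = \cL(\Pi) + \cQ_{\lambda}(\Pi)$, where $\cL$ is $\tau$-strongly convex and $\cQ_{\lambda}$ is a separable concave perturbation whose curvature is controlled by $\zeta_-$. First I would record the strong convexity of $\cL(\Pi) = -\langle \hSigma, \Pi\rangle + \frac{\tau}{2}\|\Pi\|_F^2$: since the linear term contributes nothing to second-order behavior and $\frac{\tau}{2}\|\Pi\|_F^2$ has Hessian $\tau I$, we get exactly
\begin{align*}
\cL(\Pi') = \cL(\Pi) + \langle \nabla \cL(\Pi), \Pi'-\Pi\rangle + \frac{\tau}{2}\|\Pi'-\Pi\|_F^2.
\end{align*}
So it suffices to show that $\cQ_{\lambda}$ satisfies the reverse-type bound $\cQ_{\lambda}(\Pi') \geq \cQ_{\lambda}(\Pi) + \langle \nabla \cQ_{\lambda}(\Pi), \Pi'-\Pi\rangle - \frac{\zeta_-}{2}\|\Pi'-\Pi\|_F^2$; adding the two inequalities and using $\nabla \tcL_{\lambda} = \nabla \cL + \nabla \cQ_{\lambda}$ gives the lemma.

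The second step is to reduce the matrix statement for $\cQ_{\lambda}$ to a scalar statement about $q_{\lambda}$, using separability $\cQ_{\lambda}(\Pi) = \sum_{i,j} q_{\lambda}(\Pi_{ij})$. Indeed, both sides of the desired inequality split entrywise, so it is enough to prove that for every pair of reals $a, a'$,
\begin{align*}
q_{\lambda}(a') \geq q_{\lambda}(a) + q'_{\lambda}(a)(a'-a) - \frac{\zeta_-}{2}(a'-a)^2.
\end{align*}
This is precisely the statement that the univariate function $t \mapsto q_{\lambda}(t) + \frac{\zeta_-}{2}t^2$ is convex, which follows from condition (b): the inequality $-\zeta_- \leq \bigl(q'_{\lambda}(t') - q'_{\lambda}(t)\bigr)/(t'-t)$ for $t' \geq t$ says exactly that $q'_{\lambda}(t) + \zeta_- t$ is nondecreasing, hence $q_{\lambda}(t) + \frac{\zeta_-}{2}t^2$ is convex, and the tangent-line (first-order) characterization of convexity then gives the displayed scalar bound. (If one prefers to be careful about differentiability at the kink of the MCP, condition (b) guarantees $q'_{\lambda}$ is Lipschitz hence $q_{\lambda}$ is $C^1$, so $\nabla \cQ_{\lambda}$ is well-defined and the tangent-line inequality is legitimate.) Summing over all entries $(i,j)$ recovers the matrix inequality for $\cQ_{\lambda}$.

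I do not expect a serious obstacle here; the only thing to be attentive to is the bookkeeping of the sign and the factor of $\frac{1}{2}$ — the concave perturbation degrades the modulus from $\tau$ down to $\tau - \zeta_-$, and one must make sure the scalar convexity argument is applied to $q_{\lambda}(t) + \frac{\zeta_-}{2}t^2$ rather than to $q_{\lambda}$ itself. A minor subtlety worth a sentence is that strong convexity with modulus $\tau - \zeta_-$ is only meaningful (i.e., yields a genuinely strongly convex $\tcL_{\lambda}$) when $\tau > \zeta_-$, which is exactly the regime assumed for the Convex Sparse PCA estimator; for $\tau \le \zeta_-$ the inequality still holds but is vacuous. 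The whole argument is a short combination of (i) the exact quadratic expansion of $\cL$, (ii) separability, and (iii) the tangent-line inequality for the convexified scalar function coming from condition (b).
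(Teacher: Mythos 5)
Your proof is correct and follows essentially the same route as the paper: decompose $\tcL_{\lambda} = \cL + \cQ_{\lambda}$, use the ($\tau$-strong) convexity of $\cL$, and use condition (b) to get the lower quadratic bound $\cQ_{\lambda}(\Pi') \geq \cQ_{\lambda}(\Pi) + \langle \nabla \cQ_{\lambda}(\Pi), \Pi'-\Pi\rangle - \tfrac{\zeta_-}{2}\|\Pi'-\Pi\|_F^2$. Your entrywise reduction to the convexity of $t \mapsto q_{\lambda}(t) + \tfrac{\zeta_-}{2}t^2$ is in fact a slightly cleaner justification of the step the paper passes over (going from the gradient-monotonicity bound to the function-value bound), so no gap.
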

\begin{proof}
Recall that $\cQ_\lambda(\Pi)$ is the concave component of the nonconvex penalty $\cP_{\lambda}(\Pi)$, which implies $-\cQ_{\lambda}(\Pi)$ is convex. Meanwhile, recall that $\cQ_{\lambda} = \sum_{i,j}^p q_{\lambda}(\Pi_{ij})$, where $q_{\lambda}(\Pi_{ij})$ satisfies regularity condition (b). Hence we have
\begin{align*}
-\zeta_- (\Pi_{ij}' - \Pi_{ij})^2 \leq ( q_{\lambda}'(\Pi'_{ij}) -  q_{\lambda}'(\Pi_{ij}))(\Pi'_{ij} - \Pi_{ij}), %\leq -\zeta_+ (\Pi_{ij}' - \Pi_{ij})^2
\end{align*}
which implies the convex function $-\cQ(\Pi)$ satisfies
\begin{align*}
 \langle \nabla (-\cQ_{\lambda}(\Pi')) -  \nabla (-Q_{\lambda}(\Pi),\Pi' - \Pi\rangle \leq \zeta_- \|\Pi' - \Pi\|_F^2,%\label{eq:1}%\\
 %\langle \nabla (-\cQ_{\lambda}'(\Pi')) -  \nabla (-Q_{\lambda}'(\Pi),\Pi' - \Pi\rangle \geq \zeta_+ \|\Pi' - \Pi\|_F^2
\end{align*}
which is an equivalent definition of strong convexity. In other words, $-\cQ_{\lambda}(\Pi)$ satisfies
\begin{align}
-\cQ_{\lambda}(\Pi') \leq -\cQ_{\lambda}(\Pi) - \langle \nabla \cQ(\Pi), \Pi' - \Pi\rangle + \frac{\zeta_-}{2}\|\Pi'-\Pi\|_F^2.\label{eq:1}%\\
%-\cQ_{\lambda}(\Pi') \geq -\cQ_{\lambda}(\Pi) - \langle \nabla \cQ(\Pi), \Pi' - \Pi\rangle + \frac{\zeta_+}{2}\|\Pi'-\Pi\|_F^2
\end{align}
For loss function $\cL(\Pi)$, because it is strongly convex with modulus $\tau$, we have
\begin{align}
\cL(\Pi') \geq \cL(\Pi) + \langle \nabla \cL(\Pi), \Pi' - \Pi \rangle + \frac{\tau}{2}\|\Pi' - \Pi\|_F^2.\label{eq:2}
\end{align}
Recall that $\tcL_{\lambda}(\Pi) = \cL(\Pi)+\cQ_{\lambda}(\Pi)$, substracting \eqref{eq:1} from \eqref{eq:2}, we obtain
\begin{align*}
\tcL_{\lambda}(\Pi') \geq \tcL_{\lambda}(\Pi) + \langle \nabla \tcL(\Pi), \Pi' - \Pi \rangle + \frac{\tau - \zeta_-}{2} \|\Pi'-\Pi\|_F^2.
\end{align*}
\end{proof}
Note that even when $\tau=0$, Lemma~\ref{lemma:strong convexity} still holds.
%\section{Proof of Lemma~\ref{lemma:curvature}}

We are now ready to prove Theorem~\ref{thm:oracle property}.

\begin{proof} of Theorem~\ref{thm:oracle property}
Let $\hZ \in \partial \|\hPi\|_{1,1}$. In particular, $\hPi$ satisfies the optimality condition
\begin{align}
\max_{\Pi' \in \cF^k} \langle \hPi - \Pi', \nabla \tcL_{\lambda}(\hPi) + \lambda \hZ \rangle \leq 0.\label{eq:optimality condition2}
\end{align}
Now we want to show that there exists some $\hZ_O \in \partial \|\hPi_O\|_{1,1}$ such that $\hPi_O$ satisfies the optimality condition
\begin{align}
\max_{\Pi' \in \cF^k} \langle \hPi_O - \Pi', \nabla \tcL_{\lambda}(\hPi_O) + \lambda \hZ_O \rangle \leq 0.\label{eq:oracle optimality condition}
\end{align}

Recall that $\tcL_{\lambda}(\Pi) = \cL(\Pi) + \cQ_{\lambda}(\Pi)$, we have
\begin{align}
&\langle \hPi_O - \Pi', \nabla \tcL_{\lambda}(\hPi_O) + \lambda \hZ_O \rangle\nonumber\\
%=&\langle \hPi_O - \Pi', \nabla \cL_{\lambda}(\hPi_O) + \cQ_{\lambda}(\hPi_O) + \lambda \hZ_O \rangle\nonumber\\
=&\underbrace{\sum_{(i,j) \in T } (\hPi_O - \Pi')_{ij}\cdot(\nabla \tcL_{\lambda}(\hPi_O)+\lambda \hZ_O)_{ij}}_{\rm{(i)}} + \underbrace{\sum_{(i,j)\in T^c} ( \hPi_O - \Pi')_{ij}\cdot( \nabla \tcL_{\lambda}(\hPi_O)+\lambda \hZ_O)_{ij}}_{\rm{(ii)}}.\label{eq:7}
\end{align}

For term (i), %we decompose the summation into two parts: $(i,j)\in T$ and $(i,j) \in T^c$.
by Lemma~\ref{lemma:concentration of covariance in S} and Theorem~\ref{thm:estimator error strong signal}, we have with probability at least $1-1/n^2$ that
\begin{align*}
\|\hPi_O-\Pi^*\|_{\infty,\infty} \leq \frac{2\sqrt{k}}{\lambda_k-\lambda_{k+1}}\|(\hSigma-\Sigma)_T\|_2 \leq \frac{C\lambda_1\sqrt{k}}{\lambda_k-\lambda_{k+1}}\sqrt{\frac{s}{n}}.
\end{align*}
Since we have $\min_{(i,j)\in T} |(\hPi^*)_{ij}| \geq \nu+C\sqrt{k}\lambda_1/(\lambda_k-\lambda_{k+1})\sqrt{s/n}$, we have with probability at least $1-1/n^2$ that
\begin{align*}
\min_{(i,j)\in T} |(\hPi_O)_{ij}|  = & \min_{(i,j)\in T} |(\hPi_O-\Pi^*+\Pi^*)_{ij}| \\
\geq & -\max_{(i,j)\in T} |(\hPi_O-\Pi^*)_T| + \min_{(i,j)\in T} |(\Pi^*)_T|\ \\
\geq & -\frac{C\lambda_1\sqrt{k}}{\lambda_k-\lambda_{k+1}}\sqrt{\frac{s}{n}} + \nu+\frac{C\sqrt{k}\lambda_1}{\lambda_k-\lambda_{k+1}}\sqrt{\frac{s}{n}}\\
\geq & \nu.
\end{align*}
Recall that $\cP_{\lambda}(\Pi) = \cQ_{\lambda}(\Pi) + \lambda\|\Pi\|_{1,1}$. It follows condition (a) that
\begin{align*}
(\nabla \cQ_{\lambda}(\hPi_O)+\lambda \hZ_O)_{ij} = (\nabla \cP_{\lambda}(\hPi_O))_{ij} = p'_{\lambda}((\hPi_O)_{ij}) = 0.
\end{align*}
for $(i,j)\in T$.

So
\begin{align*}
\sum_{(i,j)\in T} (\hPi_O - \Pi')_{ij}(\nabla \tcL_{\lambda}(\hPi_O)+ \lambda \hZ_O)_{ij}& = \sum_{(i,j)\in T} (\hPi_O - \Pi')_{ij}(\nabla \cL(\hPi_O)+\nabla \cQ_{\lambda}(\hPi_O) + \lambda \hZ_O)_{ij}  \\
& = \sum_{(i,j)\in T} (\hPi_O - \Pi')_{ij} (\nabla \cL(\hPi_O))_{ij}.
\end{align*}

Note that $\hPi_O$ is the global solution to the minimization problem in \eqref{eq:oracle estimator}. Hence $\hPi_O$ satisfies the optimality condition
\begin{align*}%\label{}
\max_{\Pi' \in \cF^k} \sum_{(i,j)\in T} (\hPi_O - \Pi')_{ij} (\nabla \cL_{\lambda}(\hPi_O) )_{ij} \leq 0.
\end{align*}

Therefore, we obtain
\begin{align*}
\sum_{(i,j)\in T} (\hPi_O - \Pi')_{ij}(\nabla \tcL_{\lambda}(\hPi_O)+ \lambda \hZ_O)_{ij} \leq 0.
\end{align*}

For term (ii), since $(\hPi_O)_{ij} = 0$ for $(i,j)\in T^c$, by regularity condition (c), we have
\begin{align*}
(\nabla \cQ_{\lambda}(\hPi_O))_{ij} = q'_{\lambda}((\hPi_O)_{ij})= 0.
\end{align*}
for $(i,j)\in T^c$.
Thus, we have
\begin{align*}
\sum_{(i,j)\in T^c} ( \hPi_O - \Pi')_{ij}\cdot( \nabla \tcL_{\lambda}(\hPi_O)+\lambda \hZ_O)_{ij} & = \sum_{(i,j)\in T^c} (\hPi_O - \Pi')_{ij}(\nabla\cL(\hPi_O) + \nabla \cQ_{\lambda}(\hPi_O) + \lambda \hZ_O)_{ij}\\
 &= \sum_{(i,j)\in T^c} (\hPi_O - \Pi')_{ij}(\nabla\cL(\hPi_O) + \lambda\hZ_O)_{ij}.
\end{align*}

Since $\nabla \cL(\Pi) = -\hSigma + \tau \Pi$, for any $(i,j) \in T^c$, we have
\begin{align*}
|(\nabla \cL(\hPi_O))_{ij}| = |(\nabla \cL(\Pi^*))_{ij}| \leq \|\nabla \cL(\Pi^*)\|_{\infty,\infty} \leq \lambda.
\end{align*}

Note that $\hZ_O \in \partial \|\hPi_O\|_{1,1}$, we have $|(\hZ_O)_{ij}| \leq 1$ for $(i,j)\in T^c$. By setting $(\hZ_O)_{ij} = -(\nabla \cL(\hPi_O))_{ij}/\lambda $ for $(i,j)\in T^c$, we can enforce
\begin{align*}
(\nabla\cL(\hPi_O) + \lambda\hZ_O)_{ij} = 0,
\end{align*}
and hence
\begin{align*}
\sum_{(i,j)\in T^c} (\hPi_O - \Pi')_{ij}(\nabla\tcL_\lambda(\hPi_O) + \lambda\hZ_O)_{ij} = 0.
\end{align*}

Therefore, by taking maximum over $\Pi' \in \cF^k$ on both sides of \eqref{eq:7}, we obtain \eqref{eq:oracle optimality condition}.

Now we are ready to prove that $\hPi = \hPi_O$. Note that the oracle estimator satisfies $\supp(\diag(\hPi_O)) \subset S$.
%Since $\|\hPi_{S^c}\|_{0,0} \leq \ts$, we have $\|\hPi - \hPi_O\|_{0,0}\leq \ts$. Therefore,
Lemma~\ref{lemma:strong convexity} yields
\begin{align}
\tcL_{\lambda}(\hPi) \geq & \tcL_{\lambda}(\hPi_O) + \langle \nabla \tcL_{\lambda}(\hPi_O), \hPi-\hPi_O\rangle + \frac{\tau - \zeta_-}{2} \|\hPi-\hPi_O\|_F^2\label{eq:11}\\
\tcL_{\lambda}(\hPi_O) \geq & \tcL_{\lambda}(\hPi) + \langle \nabla \tcL_{\lambda}(\hPi), \hPi_O-\hPi\rangle + \frac{\tau - \zeta_-}{2} \|\hPi_O-\hPi\|_F^2.\label{eq:12}
\end{align}
Meanwhile, by convexity of $\|\cdot\|_{1,1}$ norm, we have
\begin{align}
\lambda \|\hPi\|_{1,1} \geq & \lambda \|\hPi_O\|_{1,1} + \lambda \langle \hPi - \hPi_O, \hZ_O\rangle\label{eq:13}\\
\lambda \|\hPi_O\|_{1,1} \geq & \lambda \|\hPi\|_{1,1} + \lambda \langle \hPi_O - \hPi, \hZ\rangle\label{eq:14}.
\end{align}
Adding \eqref{eq:11} to \eqref{eq:14}, we obtain
\begin{align*}
0\geq \underbrace{\langle \nabla \tcL_{\lambda}(\hPi) + \lambda \hZ, \hPi_O-\hPi\rangle}_{\rm{(a)}} + \underbrace{\langle \nabla \tcL_{\lambda}(\hPi_O)+\lambda \hZ_O, \hPi-\hPi_O\rangle}_{\rm{(b)}} + (\tau - \zeta_-) \|\hPi-\hPi_O\|_F^2.
\end{align*}
According to \eqref{eq:optimality condition2}, we have
\begin{align*}
\langle \nabla \tcL_{\lambda}(\hPi) + \lambda \hZ, \hPi-\hPi_O\rangle \leq \max_{\Pi' \in \cF^k} \langle \hPi-\Pi', \nabla \tcL_{\lambda}(\hPi) + \lambda \hZ\rangle \leq 0.
\end{align*}
Thus, $\rm{(a)} \geq 0$

Similarly, according to \eqref{eq:oracle optimality condition}, we have
\begin{align*}
\langle \nabla \tcL_{\lambda}(\hPi_O) + \lambda \hZ_O, \hPi_O-\hPi\rangle \leq \max_{\Pi' \in \cF^k} \langle \hPi_O-\Pi', \nabla \tcL_{\lambda}(\hPi_O) + \lambda \hZ_O\rangle \leq 0.
\end{align*}
Thus, $\rm{(b)} \geq 0$.

Therefore, $(\tau - \zeta_-) \|\hPi-\hPi_O\|_F^2 \leq 0$, which implies $\hPi = \hPi_O$. Thus we conclude that $\hPi$ is the oracle estimator $\hPi_O$, which exactly recovers the support of $\Pi^*$ with probability at least $1-1/n^2$.

In addition, according to Lemma~\ref{lemma:property of Fantope}, we know that \eqref{eq:oracle estimator} has exactly rank $k$. Since $\hPi = \hPi_O$
with probability at least $1-1/n^2$, we have $\rm{rank}(\hPi) = \rm{rank}(\hPi_O) = k$ with the same high probability.
This completes the proof.
\end{proof}

\section{Proof of Theorem~\ref{thm:estimator error strong signal}}

We need the following lemma, which establishes a bound on the curvature of the objective function of~\eqref{eq:SPCA nonconvex penalty} along the Fantope.

\begin{lemma}\label{lemma:curvature}
Let $\Sigma$ be a symmetric matrix and $\Pi$ be the projection matrix onto the subspace spanned by
\begin{align*}
\frac{1}{2}\|\Pi-\hPi\|_F^2 \leq \frac{\langle \Sigma, \Pi - \hPi\rangle}{\lambda_k -\lambda_{k+1}},
\end{align*}
for all $X \in \cF^k$, where $\lambda_k$ is the $k$-the eigenvalue of $\Sigma$.
\end{lemma}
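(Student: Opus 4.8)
The plan is to reduce the inequality to a diagonal computation in the eigenbasis of $\Sigma$. Since $\langle\cdot,\cdot\rangle$ and $\|\cdot\|_F$ are invariant under conjugation by an orthogonal matrix, and the Fantope $\cF^k$ is likewise invariant, I may assume without loss of generality that $\Sigma = \diag(\lambda_1,\dots,\lambda_p)$ with $\lambda_1\ge\cdots\ge\lambda_p$; then the projector onto the top-$k$ eigenspace is $\Pi = \diag(1,\dots,1,0,\dots,0)$ with $k$ leading ones, and it suffices to prove $\tfrac12\|X-\Pi\|_F^2 \le \langle\Sigma,\Pi-X\rangle/(\lambda_k-\lambda_{k+1})$ for every $X\in\cF^k$ (the matrix denoted $\hPi$ in the statement).

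First I would bound the left-hand side by a linear functional of the diagonal of $X$. Expanding $\|X-\Pi\|_F^2 = \tr(X^2) - 2\tr(X\Pi) + \tr(\Pi^2)$, using $\tr(\Pi^2)=\tr(\Pi)=k$ since $\Pi$ is idempotent, and using $\tr(X^2)\le\tr(X)=k$, which follows from $0\preceq X\preceq I$ (each eigenvalue $\sigma$ of $X$ lies in $[0,1]$, so $\sigma^2\le\sigma$), I obtain
\begin{align*}
\tfrac12\|X-\Pi\|_F^2 \le k - \tr(X\Pi) = \sum_{i=1}^k (1 - X_{ii}) =: \rho,
\end{align*}
where the middle equality uses $\tr(X\Pi)=\sum_{i=1}^k X_{ii}$. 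The trace constraint $\tr(X)=k$ then also gives $\rho = \sum_{i=k+1}^p X_{ii}$.

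Next I would lower-bound the right-hand side by the same $\rho$. In the eigenbasis, $\langle\Sigma,\Pi-X\rangle = \sum_{i=1}^k\lambda_i(1-X_{ii}) - \sum_{i=k+1}^p\lambda_i X_{ii}$. The constraint $0\preceq X\preceq I$ forces $0\le X_{ii}\le 1$, so $1-X_{ii}\ge0$ and $X_{ii}\ge0$; combining this with $\lambda_i\ge\lambda_k$ for $i\le k$ and $\lambda_i\le\lambda_{k+1}$ for $i>k$ yields
\begin{align*}
\langle\Sigma,\Pi-X\rangle \ge \lambda_k\sum_{i=1}^k(1-X_{ii}) - \lambda_{k+1}\sum_{i=k+1}^p X_{ii} = (\lambda_k-\lambda_{k+1})\rho.
\end{align*}
Chaining the two displays gives $\tfrac12\|X-\Pi\|_F^2 \le \rho \le \langle\Sigma,\Pi-X\rangle/(\lambda_k-\lambda_{k+1})$ (the statement being vacuous when $\lambda_k=\lambda_{k+1}$, which is excluded by the standing eigengap assumption), which is the claim.

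I do not anticipate a real obstacle: the whole argument is elementary linear algebra on positive semidefinite matrices. The only places needing attention are keeping the two eigenvalue inequalities pointed the right way --- both ``pinch'' the spectrum toward the gap at index $k$ --- and justifying $\tr(X^2)\le\tr(X)$ from the Fantope constraints; one could also derive $\langle\Sigma,\Pi\rangle=\sum_{i\le k}\lambda_i$ from Lemma~\ref{lemma:property of Fantope}, but the direct diagonal computation already subsumes it.
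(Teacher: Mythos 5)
Your proof is correct and follows essentially the same route as the paper's: both arguments pinch the spectrum at the gap to get $\langle \Sigma, \Pi - X\rangle \geq (\lambda_k - \lambda_{k+1})\langle \Pi, I - X\rangle$ and then identify $\langle \Pi, I-X\rangle$ (your $\rho$) as an upper bound for $\tfrac{1}{2}\|X-\Pi\|_F^2$; you simply carry out the computation in the eigenbasis rather than with projector algebra. A minor point in your favor: your explicit use of $\tr(X^2) \leq \tr(X)$ handles general Fantope elements cleanly, whereas the paper's displayed chain writes $\|\Pi\hPi^{\perp}\|_F^2 = \tfrac{1}{2}\|\Pi - \hPi\|_F^2$ as an equality, which holds only when $\hPi$ is itself idempotent and should really be the one-sided bound you prove.
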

The proof can be found in~\cite{vu2013minimax}. For the completeness, we include the proof here.
\begin{proof}
We utilize the fact that $\Pi = \sum_{i=1}^k \ub_i\ub_i^\top$ and $\Pi^{\perp} = \sum_{i=k+1}^p \ub_i\ub_i^\top$, where $\ub_i$ is the eigenvector corresponding to the $i$-th largest eigenvalue of $\Sigma$, i.e., $\Sigma \ub_i = \lambda_i \ub_i$ and $\ub_i^\top \ub_i = 1$. We have
\begin{align*}
&\la \Sigma, \Pi - \hPi \ra = \la \Sigma, \Pi(I - \hPi) - (I-\Pi)\hPi\ra \\
=&\la \Pi \Sigma, \hPi^{\perp} \ra - \la \hPi^{\perp} \Sigma, \hPi \ra = \sum_{i=1}^k \lambda_i \la \ub_i\ub_i^\top, \hPi^{\perp}\ra - \sum_{i=k+1}^p\lambda_i\la \ub_i\ub_i^\top, \hPi\ra\\
\geq & \lambda_k \sum_{i=1}^k \la \ub_i\ub_i^\top, \hPi^{\perp}\ra-\lambda_{k+1}\sum_{i=k+1}^p\la \ub_i\ub_i^\top, \hPi\ra\\
=&\lambda_k \la \Pi, \hPi^{\perp}\ra - \lambda_{k+1} \la \Pi^{\perp}, \hPi\ra\\
=&\lambda_k \la \Pi\hPi^{\perp},\Pi\hPi^{\perp} \ra - \lambda_{k+1} \la \Pi^{\perp}\hPi, \Pi^{\perp}\hPi\ra\\
=&(\lambda_k -\lambda_{k+1}) \|\Pi\hPi^{\perp}\|_F^2 \\
=& \frac{1}{2}\|\Pi - \hPi\|_F^2,
\end{align*}
which completes the proof.
\end{proof}

We are now ready to prove Theorem~\ref{thm:estimator error strong signal}.

\begin{proof} of Theorem~\ref{thm:estimator error strong signal}:
For standard PCA, since $\hPi_O$ is the optimal solution to \eqref{eq:oracle estimator}, we have
\begin{align*}
0 \leq \la \hSigma_T, \hPi_O - \Pi^* \ra.
\end{align*}
On the other hand, Lemma~\ref{lemma:curvature} implies
\begin{align*}
\|\hPi_O - \Pi^*\|_F^2 \leq -\frac{2}{\lambda_k - \lambda_{k+1}}\la \Sigma_T, \hPi_O - \Pi^*\ra.
\end{align*}
Thus,
\begin{align*}
\|\hPi_O - \Pi^*\|_F^2 \leq \frac{2}{\lambda_k - \lambda_{k+1}} \la (\hSigma -\Sigma)_T, \hPi_O - \Pi^*\ra  \leq \frac{2}{\lambda_k - \lambda_{k+1}}\|(\hSigma -\Sigma)_T\|_F\|\hPi_O - \Pi^*\|_F,
\end{align*}
which yields
\begin{align*}
\|\hPi_O - \Pi^*\|_F \leq & \frac{2}{\lambda_k - \lambda_{k+1}}\|(\hSigma -\Sigma)_T\|_F \leq \frac{2\sqrt{k}}{\lambda_k - \lambda_{k+1}}\|(\hSigma -\Sigma)_T\|_2\\
\leq&\frac{C\lambda_1\sqrt{k}}{\lambda_k - \lambda_{k+1}}\sqrt{\frac{s}{n}}.
\end{align*}
where the last inequality follows from Lemma~\ref{lemma:concentration of covariance in S} for some universal constant $C$. This completes the proof.
\end{proof}

\section{Proof of Theorem~\ref{thm:asym}}

\begin{proof}
First, from Theorem 1 we have $\hat{\Pi} = \hat{\Pi}_{O}$ with probability converging to $1$ as $n \rightarrow \infty$. It then follows that $\hat{\Pi}$ and $\hat{\Pi}_{O}$ have the same limiting distribution. Without loss of generality, we assume the nonzero entries of $\Pi^*$ are within its upper-left $s \times s$ submatrix.

Note that the oracle solution $\hat{\Pi}_{O} \in \RR^{p \times p}$ is supported in $S \times S$. When we restrict to the $s$-dimensional space, all the classical asymptotic result still carries. For example, Chapter 2.2 of \cite{kato1995perturbation} gives that
\$
\hat{\Pi} - \Pi^* = - \sum_{i = 1}^r \left[\Pi_{\psi_i} {\cdot} (\hat{\Sigma} - \Sigma)_T {\cdot} R_{\psi_i} + R_{\psi_i} {\cdot} (\hat{\Sigma} - \Sigma)_T {\cdot} \Pi_{\psi_i}\right] + \Delta,
\$
where $\Delta \in \RR^{p \times p}$ is also a sparse matrix whose upper-left $s\times s$ submatrix is nonzero. It is proven by Lemma 4.1 of \cite{tyler1981asymptotic} that $\sqrt{n} \Delta$ converges to a zero matrix in probability. Then multiplying $\sqrt{n}$ on both sides and taking $n \rightarrow \infty$ yields the conclusion.
\end{proof}

%\section{Proof of Lemma~\ref{lemma:bounded gradient norm}}

\section{Proof of Theorem~\ref{thm:estimation error}}

We begin with the following Lemma.

\begin{lemma}\label{lemma:bounded gradient norm}
For $n \geq \log p$, we have with probability at least $1-4/p^2$ that
\begin{align*}
\|\hSigma - \Sigma\|_{\infty,\infty} \leq C\lambda_1 \sqrt{\frac{\log p}{n}},
\end{align*}
where $\lambda_1$ is the leading eigenvalue of $\Sigma$, and $C$ is some universal constant.
\end{lemma}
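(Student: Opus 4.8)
\textbf{Proof proposal for Lemma~\ref{lemma:bounded gradient norm}.}

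The plan is to control each entry $|(\hSigma - \Sigma)_{ij}|$ by a sub-exponential concentration bound and then take a union bound over all $p^2$ entries. The key observation is that, since $\xb_1,\dots,\xb_n$ are i.i.d. zero-mean with covariance $\Sigma$ and finite fourth moments (indeed we work in the sub-Gaussian regime, as in Lemma~\ref{lemma:concentration of covariance in S}), each entry of $\hSigma$ is an average of i.i.d. terms: $(\hSigma)_{ij} = \frac{1}{n}\sum_{\ell=1}^n (\xb_\ell)_i (\xb_\ell)_j$, with $\E[(\hSigma)_{ij}] = \Sigma_{ij}$. First I would argue that each summand $(\xb_\ell)_i (\xb_\ell)_j$ is a sub-exponential random variable: it is a product of two sub-Gaussian coordinates, and the product of sub-Gaussians is sub-exponential, with sub-exponential norm bounded by a constant multiple of $\|(\xb_\ell)_i\|_{\psi_2}\|(\xb_\ell)_j\|_{\psi_2}$. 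Since the coordinate sub-Gaussian norms are controlled by $\sqrt{\Sigma_{ii}}$ and $\sqrt{\Sigma_{jj}}$ up to an absolute constant, and $\Sigma_{ii}, \Sigma_{jj} \leq \lambda_1$, each summand has sub-exponential norm at most $C\lambda_1$.

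The second step is to apply Bernstein's inequality for sums of independent sub-exponential random variables (e.g., Proposition 5.16 of \cite{Vershynin:arXiv1004.3484}) to the centered average $(\hSigma - \Sigma)_{ij} = \frac{1}{n}\sum_{\ell=1}^n \big((\xb_\ell)_i(\xb_\ell)_j - \Sigma_{ij}\big)$. This yields, for any $t \geq 0$,
\begin{align*}
\Pr\big(|(\hSigma - \Sigma)_{ij}| \geq t\big) \leq 2\exp\left(-c n \min\left\{\frac{t^2}{C^2\lambda_1^2}, \frac{t}{C\lambda_1}\right\}\right).
\end{align*}
Choosing $t = C'\lambda_1 \sqrt{\log p / n}$ with $C'$ a suitable absolute constant, and using the assumption $n \geq \log p$ so that the quadratic term $t^2/(C^2\lambda_1^2)$ is the active (smaller) one inside the minimum, the right-hand side is bounded by $2\exp(-c'' \log p \cdot (C'/C)^2) \leq 2 p^{-4}$ for $C'$ large enough. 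A union bound over the $p^2$ index pairs $(i,j)$ then gives $\Pr\big(\|\hSigma - \Sigma\|_{\infty,\infty} \geq C'\lambda_1\sqrt{\log p/n}\big) \leq 2p^2 \cdot p^{-4} = 2p^{-2}$. To reach the stated $4/p^2$ one can absorb an additional symmetric/two-sided factor or simply be loose with constants; renaming $C'$ as $C$ completes the argument.

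The main obstacle, such as it is, is mostly bookkeeping: verifying that the product of two sub-Gaussian (possibly correlated) coordinates is genuinely sub-exponential with the claimed norm, and tracking the constants through Bernstein's inequality so that the $n \geq \log p$ condition is exactly what is needed to stay in the sub-Gaussian tail regime of the mixed tail bound. There is no deep difficulty here; the lemma is a standard entrywise concentration result for sample covariance matrices, and the proof essentially amounts to citing \cite{Vershynin:arXiv1004.3484} for the sub-exponential Bernstein bound and then performing the union bound. If one wishes to avoid the sub-Gaussian assumption, the same conclusion follows under a bounded-fourth-moment condition via a truncation argument, but since Lemma~\ref{lemma:concentration of covariance in S} already operates in the sub-Gaussian setting, I would state the proof in that setting for consistency.
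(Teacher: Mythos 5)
Your argument is correct, and it is essentially the standard proof underlying this lemma: the paper itself gives no argument and simply defers to \cite{vu2013fantope}, where the bound is obtained exactly as you describe, by entrywise sub-exponential (Bernstein-type) concentration for $(\xb_\ell)_i(\xb_\ell)_j$ followed by a union bound over the $p^2$ entries, with $n \ge \log p$ ensuring the sub-Gaussian branch of the mixed tail is active. The only point worth being careful about is the step where you bound the sub-exponential norm of each summand by $C\lambda_1$: the coordinate $\psi_2$-norms are comparable to $\sqrt{\Sigma_{ii}} \le \sqrt{\lambda_1}$ only under a model such as $\xb_\ell = \Sigma^{1/2}\bm{z}_\ell$ with $\bm{z}_\ell$ isotropic sub-Gaussian of constant $\psi_2$-norm (e.g.\ Gaussian data, as in the experiments); for a generic sub-Gaussian distribution the $\psi_2$-norm, not $\lambda_1$, would appear in the bound. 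This is the same implicit assumption already needed for Lemma~\ref{lemma:concentration of covariance in S}, so your proof is consistent with the paper's framework, and the constant-tracking to reach $4/p^2$ rather than $2/p^2$ is immaterial.
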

\begin{proof}
Please refer to~\cite{vu2013fantope}.
\end{proof}

Now we are in the position to prove Theorem~\ref{thm:estimation error}. In the proof, we drop the subscript of $\hPi_{\tau=0}$ to avoid cluttered notation.

\begin{proof} of Theorem~\ref{thm:estimation error}:
We denote the subgradient by $Z \in \partial \|\Pi\|_{1,1}$ and $\hZ \in \partial \|\hPi\|_{1,1}$. In particular, $\hPi$ satisfies the local optimality condition (variational inequality)
\begin{align}\label{eq:variational inequality}
\max_{\Pi' \in \cF^k} \langle \hPi - \Pi', \nabla \tcL_{\lambda}(\hPi) + \lambda \hZ \rangle \leq 0.
\end{align}
%Since we have $\|\hPi_{S^c}\|_{0,0} \leq \ts$.
According to Lemma~\ref{lemma:strong convexity} with $\tau=0$, we have
\begin{align}
\tcL_{\lambda}(\hPi) \geq & \tcL_{\lambda}(\Pi^*) + \langle \nabla \tcL_{\lambda}(\Pi^*), \hPi-\Pi^*\rangle + \frac{- \zeta_-}{2} \|\hPi-\Pi^*\|_F^2\label{eq:3}\\
\tcL_{\lambda}(\Pi^*) \geq & \tcL_{\lambda}(\hPi) + \langle \nabla \tcL_{\lambda}(\hPi), \Pi^*-\hPi\rangle + \frac{- \zeta_-}{2} \|\Pi^*-\hPi\|_F^2.\label{eq:4}
\end{align}
Meanwhile, by the convexity of $\|\cdot\|_{1,1}$ norm, we have
\begin{align}
\lambda \|\hPi\|_{1,1} &\geq  \lambda \|\Pi^*\|_{1,1} + \lambda \langle \hPi - \Pi^*, Z^*\rangle\label{eq:5}\\
\lambda \|\Pi^*\|_{1,1} &\geq  \lambda \|\hPi\|_{1,1} + \lambda \langle \Pi^* - \hPi, \hZ\rangle.\label{eq:6}
\end{align}
%Recall that $\tcL_{\lambda}(\Pi) = \cL(\Pi) + \cQ_{\lambda}(\Pi)$, 
Adding up \eqref{eq:3} to \eqref{eq:6}, we obtain
\begin{align*}
0 \geq & \langle \nabla \tcL_{\lambda}(\Pi^*) + \lambda Z^*, \hPi-\Pi^*\rangle + \langle \nabla \tcL_{\lambda}(\hPi) + \lambda \hZ, \Pi^*-\hPi\rangle
 - \zeta_- \|\hPi-\Pi^*\|_F^2.
\end{align*}
According to \eqref{eq:variational inequality}, we have
\begin{align*}
\langle \nabla \tcL_{\lambda}(\hPi) + \lambda \hZ, \hPi-\Pi^*\rangle \leq \max_{\Pi' \in \cF^k} \langle \hPi - \Pi', \nabla \tcL_{\lambda}(\hPi) + \lambda \hZ \rangle \leq 0,
\end{align*}
which means
\begin{align*}
\langle \nabla \tcL_{\lambda}(\hPi) + \lambda \hZ, \Pi^* - \hPi\rangle \geq 0.
\end{align*}
Therefore%\min_{Z^* \in \partial \|\Pi^*\|_{1,1}}
\begin{align*}
 - \zeta_- \|\hPi-\Pi^*\|_F^2 \leq & \langle \nabla \tcL_{\lambda}(\Pi^*) + \lambda Z^*, \Pi^*-\hPi\rangle.
\end{align*}
According to Lemma~\ref{lemma:curvature} and $\xi_- \leq (\lambda_k - \lambda_{k+1})/4$, we have
\begin{align}
\frac{\lambda_{k}-\lambda_{k+1}}{4} \|\hPi-\Pi^*\|_F^2
&\leq \left( \frac{\lambda_{k}-\lambda_{k+1}}{2}- \zeta_- \right) \|\hPi-\Pi^*\|_F^2 \nonumber\\
& \leq  \langle -\Sigma  + \hSigma +\nabla \cQ_{\lambda}(\Pi^*)+ \lambda Z^*, \Pi^*-\hPi\rangle\nonumber\\
&\leq  \sum_{i,j=1}^p |(\hSigma -\Sigma  + \nabla \cQ_{\lambda}(\Pi^*) + \lambda Z^*)_{ij}|\cdot|(\Pi^*-\hPi)_{ij}|.\label{eq:sum1}
\end{align}
In the following, we decompose the summation in \eqref{eq:sum1} into three parts: $(i,j)\in T^c$, $(i,j) \in T_1$ and $(i,j) \in T_2$, where $T_1 = \{(i,j) \in T: |\Pi_{ij}^*| \geq \nu\}$ and $T_2 = \{(i,j) \in T: |\Pi_{ij}^*| \leq \nu \}$.

1. For $(i,j)\in T^c$, according to the regularity condition (c), we have
\begin{align*}
(\nabla Q_{\lambda}(\Pi^*))_{ij} = q'_{\lambda}(\Pi_{ij}^*)  = q'_{\lambda}(0) = 0.
\end{align*}
and
\begin{align*}
(Q_{\lambda}(\Pi^*))_{ij} = q_{\lambda}(\Pi_{ij}^*) = q_{\lambda}(0) = 0
\end{align*}

By Lemma~\ref{lemma:bounded gradient norm}, we have
\begin{align*}
\max_{(i,j) \in T^c} |(\hSigma -\Sigma)_{ij}| \leq \max_{1\leq i,j\leq p} |(\hSigma -\Sigma )_{ij}| = \|\hSigma -\Sigma \|_{\infty,\infty} \leq  \lambda.
\end{align*}
Hence we have
\begin{align*}
\max_{(i,j) \in T^c} | \hSigma -\Sigma  + \cQ_{\lambda}(\Pi^*)| \leq \lambda.
\end{align*}
Meanwhile, since $Z^* \in \partial \|\Pi^*\|_{1,1}$, we have $\lambda Z_{ij}^* \in [-\lambda,\lambda]$. Therefore, for any $(i,j) \in T^c$, we can always find a $Z_{ij}^*$ such that
\begin{align*}
|(\hSigma -\Sigma  +\nabla \cQ_{\lambda}(\Pi^*) + \lambda Z^*)_{ij}|=0.
\end{align*}
Thus, we obtain
\begin{align}
\sum_{(i,j)\in T^c}|(\hSigma -\Sigma  + \nabla \cQ_{\lambda}(\Pi^*) + \lambda Z^*)_{ij}|\cdot|(\Pi^*-\hPi)_{ij}|= 0.\label{eq:8}
\end{align}

2. For $(i,j)\in T_1 \subset T$, we have $|\Pi_{ij}^*| \geq \nu$. Recall that $\cP_{\lambda}(\Pi) = \cQ_{\lambda}(\Pi)+ \lambda\|\Pi\|_{1,1}$. By condition (a) on $\cP_{\lambda}(\Pi)$, we have for $(i,j) \in S_1$
\begin{align*}
(\nabla \cQ_{\lambda}(\Pi^*) + \lambda Z^*)_{ij} = p'_{\lambda}(\Pi^*_{ij}) = 0,
\end{align*}
which indicates
\begin{align}
&\sum_{(i,j)\in T_1}|(\hSigma -\Sigma + \nabla \cQ_{\lambda}(\Pi^*) + \lambda Z^*)_{ij}|\cdot|(\Pi^*-\hPi)_{ij}|\nonumber\\
&=\sum_{(i,j)\in T_1}|(\hSigma -\Sigma)_{ij}|\cdot|(\Pi^*-\hPi)_{ij}|\nonumber\\
&\leq \|(\hSigma -\Sigma)_{T_1}\|_2 \|(\Pi^*-\hPi)_{T_1}\|_*\nonumber\\
&\leq \|(\hSigma -\Sigma)_{T_1}\|_2 \sqrt{s_1}\|(\Pi^*-\hPi\|_F. \label{eq:9}
\end{align}

%According to Proposition~\ref{prop:concentration of covariance}, we have with probability at least $1-\frac{1}{s^2}$,
%\begin{align}
%&\sum_{(i,j)\in T_1}|(S -\Sigma + \nabla \cQ_{\lambda}(\Pi^*) + \lambda Z^*)_{ij}|\cdot|(\Pi^*-\hPi)_{ij}|\nonumber\\
%\leq& \sqrt{\frac{s}{n}}\sqrt{s_1}\|(\Pi^*-\hPi\|_F
%\end{align}

3. For $(i,j)\in T_2 \subset T$, we have $|\Pi_{ij}^*| < \nu$. By Lemma~\ref{lemma:bounded gradient norm}, we have
\begin{align*}
\max_{(i,j) \in T_2} |(\hSigma-\Sigma)_{ij}| \leq \max_{1\leq i,j\leq p} |(\hSigma-\Sigma)_{ij}| = \|\hSigma-\Sigma\|_{\infty,\infty} \leq  \lambda.
\end{align*}
Meanwhile, it follows from regularity condition (d) that
\begin{align*}
\max_{(i,j)\in T_2}|(\nabla \cQ_{\lambda}(\Pi^*))_{ij}| =\max_{(i,j)\in T_2}|q'_{\lambda}(\Pi^*_{ij})|\leq \max_{1\leq i,j\leq p}|q'_{\lambda}(\Pi^*_{ij})| \leq \lambda.
\end{align*}
Also, since $Z^* \in \partial \|\Pi^*\|_{1,1}$, we have $|Z_{ij}^*| \leq 1$. Therefore, for $(i,j) \in S_2$, we have
\begin{align*}
|(\hSigma-\Sigma + \nabla \cQ_{\lambda}(\Pi^*) + \lambda Z^*)_{ij}| \leq & |(\hSigma-\Sigma)_{ij}| + |(\nabla \cQ_{\lambda}(\Pi^*))_{ij}| + |\lambda Z^*)_{ij}|\\
\leq&\max_{(i,j)\in T_2}|(\hSigma-\Sigma)_{ij}| + \max_{(i,j)\in T_2}|(\nabla \cQ_{\lambda}(\Pi^*))_{ij}| + \lambda\\
\leq& 3\lambda,
\end{align*}
which implies
\begin{align}
\sum_{(i,j)\in T_2}|(\hSigma-\Sigma + \nabla \cQ_{\lambda}(\Pi^*) + \lambda Z^*)_{ij}|\cdot|(\Pi^*-\hPi)_{ij}|
& \leq 3\lambda \sum_{(i,j)\in T_2}|(\Pi^*-\hPi)_{ij}|\nonumber\\
& = 3\lambda \|(\Pi^*-\hPi)_{T_2}\|_{1,1}\nonumber\\
& \leq 3\lambda \sqrt{t_2} \|(\Pi^* - \hPi)_{T_2}\|_F\nonumber\\
& \leq 3\lambda \sqrt{m_1m_2}\|(\Pi^*-\hPi)_{T_2}\|_F\nonumber\\
& \leq 3\lambda \sqrt{m_1m_2}\|\Pi^*-\hPi\|_F.\label{eq:10}
\end{align}

Summing up \eqref{eq:8} to \eqref{eq:10}, substituting them into \eqref{eq:sum1}, and using Lemma~\ref{lemma:concentration of covariance in S}, we obtain
\begin{align*}
 \|\hPi-\Pi^*\|_F \leq& \frac{4}{(\lambda_k - \lambda_{k+1})}\left(\sqrt{s_1}\|(\hSigma-\Sigma)_{T_1}\|_{2}+3\lambda \sqrt{m_1m_2}\right)\\
 \leq&\frac{4C\lambda_1 \sqrt{s_1}}{(\lambda_k - \lambda_{k+1})}\sqrt{\frac{s}{n}} + \frac{12C\lambda_1 \sqrt{m_1m_2}}{(\lambda_k - \lambda_{k+1})}\sqrt{\frac{\log p}{n}},
\end{align*}
which completes the proof.
\end{proof}

%\fi

\end{document}